\def\renewtheorem#1{%
  \expandafter\let\csname#1\endcsname\relax
  \expandafter\let\csname c@#1\endcsname\relax
  \gdef\renewtheorem@envname{#1}
  \renewtheorem@secpar
}
\def\renewtheorem@secpar{\@ifnextchar[{\renewtheorem@numberedlike}{\renewtheorem@nonumberedlike}}
\def\renewtheorem@numberedlike[#1]#2{\newtheorem{\renewtheorem@envname}[#1]{#2}}
\def\renewtheorem@nonumberedlike#1{
\def\renewtheorem@caption{#1}
\edef\renewtheorem@nowithin{\noexpand\newtheorem{\renewtheorem@envname}{\renewtheorem@caption}}
\renewtheorem@thirdpar
}
\def\renewtheorem@thirdpar{\@ifnextchar[{\renewtheorem@within}{\renewtheorem@nowithin}}
\def\renewtheorem@within[#1]{\renewtheorem@nowithin[#1]}
\crefname{equation}{}{} 
\crefname{section}{Sec.}{Sec.}
\crefname{enumi}{}{}
\newcommand{\T}{\mathrm{T}}                     
\newcommand{\argmax}{\operatornamewithlimits{argmax}}
\newcommand{\bigO}{\mathcal{O}}
\newcommand{\algname}{Adaptive \textsc{GP-UCB}}
\newcommand{\algabbrv}{\textsc{A-GP-UCB}}
\newcommand*\dif{\mathop{}\!\mathrm{d}}
\newcommand{\mb}[1]{\mathbf{#1}}        
\newcommand{\added}[1]{#1}
\begin{document}

\renewtheorem{lemma}{Lemma}

\title{No-Regret Bayesian Optimization with\\ Unknown Hyperparameters}

\author{\name Felix Berkenkamp \email befelix@inf.ethz.ch \\
       \addr Department of Computer Science\\
       ETH Zurich\\
       Zurich, Switzerland
       \AND
       \name Angela P. Schoellig \email schoellig@utias.utoronto.ca  \\
       \addr Institute for Aerospace Studies \\
       University of Toronto \\
       Toronto, Canada
       \AND
       \name Andreas Krause \email krausea@ethz.ch \\
       \addr Department of Computer Science\\
       ETH Zurich\\
       Zurich, Switzerland}

\editor{}

\maketitle


\begin{abstract}
Bayesian optimization (BO) based on Gaussian process models is a powerful paradigm to optimize black-box functions that are expensive to evaluate. While several BO algorithms provably converge to the global optimum of the unknown function, they assume that the hyperparameters of the kernel are known in advance. This is not the case in practice and misspecification often causes these algorithms to converge to poor local optima. In this paper, we present the first BO algorithm that is provably no-regret and converges to the optimum without knowledge of the hyperparameters. During optimization we slowly adapt the hyperparameters of stationary kernels and thereby expand the associated function class over time, so that the BO algorithm considers more complex function candidates. Based on the theoretical insights, we propose several practical algorithms that achieve the empirical sample efficiency of BO with online hyperparameter estimation, but retain theoretical convergence guarantees. We evaluate our method on several benchmark problems.
\end{abstract}

\begin{keywords}
  Bayesian optimization, Unknown hyperparameters, Reproducing kernel Hilbert space (RKHS), Bandits, No regret
\end{keywords}


\section{Introduction}
\label{sec:introduction}

The performance of machine learning algorithms often critically depends on the choice of tuning inputs, e.g., learning rates or regularization constants. Picking these correctly is a key challenge. Traditionally, these inputs are optimized using grid or random search~\citep{Bergstra2012Random}. However, as data sets become larger the computation time required to train a single model increases, which renders these approaches less applicable. Bayesian optimization (BO,~\cite{Mockus2012Bayesian}) is an alternative method that provably determines good inputs within few evaluations of the underlying objective function. BO methods construct a statistical model of the underlying objective function and use it to evaluate inputs that are informative about the optimum. However, the theoretical guarantees, empirical performance, and data efficiency of BO algorithms critically depend on their own choice of hyperparameters and, in particular, on the prior distribution over the function space. Thus, we effectively shift the problem of tuning inputs one level up, to the tuning of hyperparameters of the BO algorithm.

In this paper, we use a Gaussian processes (GP, \citet{Rasmussen2006Gaussian}) for the statistical model. We present the first BO algorithm that does not require knowledge about the hyperparameters of the GP's stationary kernel and provably converges to the global optimum. To this end, we adapt the hyperparameters of the kernel and our BO algorithm, so that the associated function space grows over time. The resulting algorithm provably converges to the global optimum and retains theoretical convergence guarantees, even when combined with online estimation of hyperparameters.

\paragraph{Related work}

General BO has received a lot of attention in recent years. Typically, BO algorithms suggest inputs to evaluate by maximizing an acqusition function that measures informativeness about the optimum. Classical acquisition functions are the \textit{expected improvement} over the best known function value encountered so far given the GP distribution~\citep{Mockus1978Application} and the \textit{Upper Confidence Bound} algorithm,~\textsc{GP-UCB}, which applies the `optimism in the face of uncertainty' principle. The latter is shown to provably converge by~\cite{Srinivas2012Gaussian}. \citet{Durand2018Streaming} extend this framework to the case of unknown measurement noise. A related method is \textit{truncated variance reduction} by~\citet{Bogunovic2016Truncated}, which considers the reduction in uncertainty at candidate locations for the optimum. \cite{Hennig2012Entropy} propose \textit{entropy search}, which approximates the distribution of the optimum of the objective function and uses the reduction of the entropy in this distribution as an acquisition function. Alternative information-theoretic methods are proposed by~\citet{Hernandez-Lobato2014Predictive,Wang2017Maxvalue,Ru2018Fast}.
%
Other alternatives are the \textit{knowledge gradient}~\citep{Frazier2009Knowledgegradient}, which is one-step Bayes optimal, and \textit{information directed sampling} by~\cite{Russo2014Learning}, which considers a tradeoff between regret and information gained when evaluating an input. \cite{Kirschner2018Information} extend the latter framework to heteroscedastic noise.

These BO methods have also been successful empirically. In machine learning, they are used to optimize the performance of learning methods~\citep{Brochu2010Tutorial,Snoek2012Practical}. BO is also applicable more broadly; for example, in reinforcement learning to optimize a parametric policy for a robot~\citep{Calandra2014Experimental,Lizotte2007Automatic,Berkenkamp2016Bayesian} or in control to optimize the energy output of a power plant~\citep{Abdelrahman2016Bayesian}. It also forms the backbone of Google vizier, a service for tuning black-box functions~\citep{Golovin2017Google}.

Some of the previous BO algorithms provide theoretical guarantees about convergence to the optimum. These theoretical guarantees only hold when the kernel hyperparameters are known~\textit{a priori}. When this is not the case, hyperparameters are often inferred using either \textit{maximum a posteriori} estimates or sampling-based inference~\citep{Snoek2012Practical}. Unfortunately, methods that estimate the hyperparameters online are known to get stuck in local optima~\citep{Bull2011Convergence}. Instead, we propose to adapt the hyperparameters online in order to enlarge the function space over time, which allows us to provide guarantees in terms of convergence to the global optimum without knowing the hyperparameters. \citet{Wang2014Theoretical} analyze this setting when a lower bound on the kernel lengthscales is known \textit{a priori}. They decrease the lengthscales over time and bound the regret in terms of the known lower-bound on the lengthscales. Empirically, similar heuristics are used by~\citet{Wang2016Bayesian,Wabersich2016Advancing}. In contrast, this paper considers the case where the hyperparameters are \textit{not known}. Moreover, the scaling of the hyperparameters in the previous two papers did not depend on the dimensionality of the problem, which can cause the function space to increase too quickly.

Considering larger function classes as more data becomes available is the core idea behind structural risk minimization~\citep{Vapnik1992Principles} in statistical learning theory. However, there data is assumed to be sampled independently and identically distributed. This is not the case in BO, where new data is generated actively based on past information.

\paragraph{Our contribution}

In this paper, we present~\algname~ (\algabbrv), the first algorithm that provably converges to the globally optimal inputs when BO hyperparameters are \emph{unknown}.
Our method expands the function class encoded in the model over time, but does so slowly enough to ensure sublinear regret and convergence to the optimum.
Based on the theoretical insights, we propose practical variants of the algorithm with guaranteed convergence. Since our method can be used as an add-on module to existing algorithms with hyperparameter estimation, it achieves similar performance empirically, but avoids local optima when hyperparameters are misspecified. In summary, we:
\begin{itemize}
  \item Provide theoretical convergence guarantees for BO with unknown hyperparameters;
  \item Propose several practical algorithms based on the theoretical insights;
  \item Evaluate the performance in practice and show that our method retains the empirical performance of heuristic methods based on online hyperparameter estimation, but leads to significantly improved performance when the model is misspecified initially.
\end{itemize}

The remainder of the paper is structured as follows. We state the problem in~\cref{sec:problem_statement} and provide relevant background material in~\cref{sec:background}. We derive our main theoretical result in~\cref{sec:theory} and use insights gained from the theory to propose practical algorithms. We evaluate these algorithms experimentally in~\cref{sec:experiments} and draw conclusions in~\cref{sec:conclusion}. The technical details of the proofs are given in the appendix.


\section{Problem Statement}
\label{sec:problem_statement}

In general, BO considers global optimization problems of the form
\begin{equation}
    \mb{x}^* = \argmax_{\mb{x} \in \mathcal{D}} f(\mb{x}),
    \label{eq:optimize_f}
\end{equation}
where~$\mathcal{D} \subset \mathbb{R}^d$ is a compact domain over which we want to optimize inputs~$\mb{x}$, and~$f \colon \mathcal{D} \to \mathbb{R}$ is an objective function that evaluates the reward~$f(\mb{x})$ associated with a given input configuration~$\mb{x}$. For example, in a machine learning application,~$f(\mb{x})$ may be the validation loss and~$\mb{x}$ may be the tuning inputs (e.g., regularization parameters) of the training algorithm. We do not have any significant prior knowledge about the structure of~$f$. Specifically, we cannot assume convexity or that we have access to gradient information. Moreover, evaluations of~$f$ are corrupted by~$\sigma$-sub-Gaussian noise, a general class of noise models that includes, for example, bounded or Gaussian noise.

\paragraph{Regret}
We aim to construct a sequence of input evaluations~$\mb{x}_t$, that eventually maximizes the function value~$f(\mb{x}_t)$. One natural way to prove this convergence is to show that an algorithm has sublinear regret. The instantaneous regret at iteration~$t$ is defined as~$r_t = \max_{\mb{x} \in \mathcal{D}}f(\mb{x}) - f(\mb{x}_t) \geq 0$, which is the loss incurred by evaluating the function at~$\mb{x}_t$ instead of at the \textit{a priori unknown} optimal inputs. The cumulative regret is defined as~$R_T = \sum_{0 < t \leq T} r_t$, the sum of regrets incurred over~$T$ steps. If we can show that the cumulative regret is sublinear for a given algorithm, that is,~$\lim_{t \to \infty} R_t \,/ \, t = 0$, then eventually the algorithm evaluates the function at inputs that lead to close-to-optimal function values most of the time. We say that such an algorithm has~\emph{no-regret}. Intuitively, if the average regret approaches zero then, on average, the instantaneous regret must approach zero too, since~$r_t$ is strictly positive. This implies that there exists a~$t>0$ such that~$f(\mb{x}_t)$ is arbitrarily close to $f(\mb{x}^*)$ and the algorithm converges. Thus, we aim to design an optimization algorithm that has sublinear regret.

\paragraph{Regularity assumptions}

Without further assumptions, it is impossible to achieve sublinear regret on~\cref{eq:optimize_f}. In the worst case, $f$ could be discontinuous at every input in~$\mathcal{D}$. To make the optimization problem in~\cref{eq:optimize_f} tractable, we make regularity assumptions about~$f$. In particular, we assume that the function~$f$ has low complexity, as measured by the norm in a reproducing kernel Hilbert space (RKHS, \cite{Christmann2008Support}). An RKHS~$\mathcal{H}_k$ contains well-behaved functions of the form~$f(\mb{x}) = \sum_{i \geq 0} \alpha_i \, k(\mb{x}, \mb{x}_i)$, for given representer points~$\mb{x}_i \in \mathbb{R}^d$ and weights~$\alpha_i \in \mathbb{R}$ that decay sufficiently quickly. The kernel~$k(\cdot, \cdot)$ determines the roughness and size of the function space and the induced RKHS norm~$\|f\|_{k} = \sqrt{ \langle f, \, f \rangle }$ measures the complexity of a function~$f \in \mathcal{H}_k$ with respect to the kernel.

In the following, we assume that~$f$ in~\cref{eq:optimize_f} has bounded RKHS norm~$\|f\|_{k_\theta} \leq B$ with respect to a kernel~$k_\theta$ that is parameterized by hyperparameters~$\theta$. We write~$\mathcal{H}_\theta$ for the corresponding RKHS,~$\mathcal{H}_{k_\theta}$. For known~$B$ and~$\theta$, no-regret BO algorithms for~\cref{eq:optimize_f} are known, e.g., \textsc{GP-UCB}~\citep{Srinivas2012Gaussian}. In practice, these hyperparameters need to be tuned. In this paper, we consider the case where~$\theta$ and~$B$ are unknown.
We focus on stationary kernels, which measure similarity based on the distance of inputs, $k(\mb{x}, \mb{x}') = k(\mb{x} - \mb{x}')$. The most commonly used hyperparameters for these kernels are the lengthscales~$\theta \in \mathbb{R}^d$, which scale the inputs to the kernel in order to account for different magnitudes in the different components of~$\mb{x}$ and effects on the output value. That is, we scale the difference~$\mb{x} - \mb{x}'$ by the lengthscales~$\theta$,
\begin{equation}
  k_\theta(\mb{x}, \mb{x}') = k\left(
  \frac{[\mb{x}]_1 - [\mb{x}']_1 } { [\theta]_1 } ,
  \, \dots, \,
  \frac{[\mb{x}]_d - [\mb{x}']_d } { [\theta]_d } \right),
  \label{eq:stationary_lengthscale_kernel}
\end{equation}
where~$[\mb{x}]_i$ denotes the $i$th element of~$\mb{x}$. Typically, these kernels assign larger similarity scores to inputs when the scaled distance between these two inputs is small. \added{Another common hyperparameter is the prior variance of the kernel, a multiplicative constant that determines the magnitude of the kernel. We assume~$k(\mb{x}, \mb{x}) = 1$ for all $\mb{x} \in \mathcal{D}$ without loss of generality, as any multiplicative scaling can be absorbed by the norm bound~$B$.}

In summary, our goal is to efficiently solve~\cref{eq:optimize_f} via a BO algorithm with sublinear regret, where~$f$ lies in some RKHS~$\mathcal{H}_\theta$, but neither the hyperparameters~$\theta$ nor the norm-bound~$\|f\|_{k_\theta}$ are known.


\section{Background}
\label{sec:background}

In this section, we review Gaussian processes (GPs) and Bayesian optimization (BO).

\subsection{Gaussian processes (GP)}
\label{sec:gaussian_process}

Based on the assumptions in~\cref{sec:problem_statement}, we can use GPs to infer confidence intervals on~$f$.
The goal of GP inference is to infer a posterior distribution over the nonlinear map~${f(\mb{x}): D \to \mathbb{R}}$ from an input vector~${\mb{x} \in D }$ to the function value~$f(\mb{x})$. This is accomplished by assuming that the function values $f(\mb{x})$, associated with different values of $\mb{x}$, are random variables and that any finite number of these random variables have a joint Gaussian distribution~\citep{Rasmussen2006Gaussian}.
A GP distribution is parameterized by a prior mean function and a covariance function or kernel $k(\mb{x}, \mb{x}')$, which defines the covariance of any two function values~$f(\mb{x})$ and $f(\mb{x}')$ for ${\mb{x}, \mb{x}' \in D}$. In this work, the mean is assumed to be zero without loss of generality. The choice of kernel function is problem-dependent and encodes assumptions about the unknown function.

We can condition a~$GP(0, k(\mb{x}, \mb{x}'))$ on a set of~$t$ past observations ${ \mb{y}_t = (y_1, \dots, y_t) }$ at inputs~$\mathcal{A}_t = \{ \mb{x}_1, \dots, \mb{x}_t \}$ in order to obtain a posterior distribution on~$f(\mb{x})$ for any input~${ \mb{x} \in D }$. The GP model assumes that observations are noisy measurements of the true function value,~$y_t = f(\mb{x}_t) + \omega_t$, where~${\omega_t \sim \mathcal{N}(0,\sigma^2)}$. The posterior distribution is again a $GP(\mu_t(\mb{x}), k_t(\mb{x}, \mb{x}'))$ with mean~$\mu_t$, covariance~$k_t$, and variance~$\sigma_t$, where
\begin{align}
\mu_t(\mb{x}) &= \mb{k}_t(\mb{x})  (\mb{K}_t + \mb{I} \sigma^2)^{-1} \mb{y}_t ,
\label{eq:gp_prediction_mean} \\
k_t(\mb{x}, \mb{x}') &= k(\mb{x}, \mb{x}') - \mb{k}_t(\mb{x}) (\mb{K}_t + \mb{I} \sigma^2)^{-1} \mb{k}_t^\T(\mb{x}'),
\label{eq:gp_prediction_covariance} \\
\sigma^2_t(\mb{x}) &= k_t(\mb{x}, \mb{x}).
\label{eq:gp_prediction_variance}
\end{align}
The covariance matrix~${\mb{K}_t \in \mathbb{R}^{t \times t}}$ has entries ${[\mb{K}_t]_{(i,j)} = k(\mb{x}_i, \mb{x}_j)}$, ${i,j\in\{1,\dots,t\}}$, and
the vector
${\mb{k}_t(\mb{x}) =
\left[ \begin{matrix}
	k(\mb{x},\mb{x}_1),\dots,k(\mb{x},\mb{x}_t)
\end{matrix}  \right]}$
contains the covariances between the input~$\mb{x}$ and the observed data points in~$\mathcal{A}_t$.
The identity matrix is denoted by~${ \mb{I}_t \in \mathbb{R}^{t \times t} }$.

\subsection{Learning RKHS functions with GPs}

The GP framework uses a statistical model that makes different assumptions from the ones made about~$f$ in~\cref{sec:problem_statement}. In particular, we assume a different noise model, and samples from a GP$(0, k(\mb{x}, \mb{x}'))$ are rougher than RKHS funtions and are not contained in~$\mathcal{H}_k$. However, GPs and RKHS functions are closely related \citep{Kanagawa2018Gaussian} and it is possible to use GP models to infer reliable confidence intervals on~$f$ in~\cref{eq:optimize_f}.
\begin{restatable}[\citet{Abbasi-Yadkori2012Online,Chowdhury2017Kernelized}]{lemma}{confidencethm}
Assume that $f$ has bounded RKHS norm $\|f\|_k \leq B$ and that measurements are corrupted by~$\sigma$-sub-Gaussian noise. If $\beta_t^{1/2} = B + 4 \sigma \sqrt{ I(\mb{y}_{t}; f) + 1 + \mathrm{ln}(1 / \delta)}$, then for all~${\mb{x} \in D}$ and~${t \geq 0}$ it holds jointly with probability at least~${1 - \delta}$ that
$
\left|\, f(\mb{x}) - \mu_{t}(\mb{x}) \,\right| \leq \beta_{t}^{1/2} \sigma_{t}(\mb{x}).
$
\label{thm:confidence_interval}
\end{restatable}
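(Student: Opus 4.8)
The plan is to treat this as essentially a citation-driven result, reconstructing the argument of \citet{Abbasi-Yadkori2012Online} and \citet{Chowdhury2017Kernelized} rather than inventing something new. The backbone is the self-normalized concentration inequality for martingales in the RKHS (equivalently, in the feature-space formulation). First I would set up the regularized least-squares / kernel ridge regression view: with the GP posterior mean $\mu_t$ coinciding with the kernel ridge regression estimator with regularization $\sigma^2$, write $\mu_t(\mb{x}) = \mb{k}_t(\mb{x})(\mb{K}_t + \sigma^2 \mb{I})^{-1}\mb{y}_t$ and decompose the error $f(\mb{x}) - \mu_t(\mb{x})$ into a deterministic bias term (coming from the regularization acting on the true $f$, bounded in terms of $\|f\|_k \le B$) and a stochastic term (coming from the noise $\bs{\omega}_t$). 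This decomposition is where the two summands in $\beta_t^{1/2}$, namely $B$ and the $4\sigma\sqrt{\cdot}$ piece, come from.

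The second step bounds the bias term. Using the reproducing property and Cauchy--Schwarz in $\mathcal{H}_k$, one shows the deterministic part of $|f(\mb{x}) - \mu_t(\mb{x})|$ is at most $B\,\sigma_t(\mb{x})$ (up to the normalization by $\sigma$ absorbed in the definition of $\sigma_t$ via the regularizer). This is a standard RKHS computation: express everything through the canonical feature map, recognize $\sigma_t(\mb{x})$ as the norm of the "residual feature" after projecting onto the span of the observed features, and apply Cauchy--Schwarz.

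The third and crucial step bounds the stochastic term. Here I would invoke the self-normalized bound for vector-valued martingales (Theorem~1 of \citet{Abbasi-Yadkori2012Online}, in its kernelized form as in \citet{Chowdhury2017Kernelized}): with probability at least $1 - \delta$, simultaneously for all $t \ge 0$, the relevant self-normalized noise quantity is bounded by a term of order $\sigma\sqrt{\ln(\det(\mb{I} + \sigma^{-2}\mb{K}_t)/\delta)}$, and then identify $\tfrac12 \ln\det(\mb{I} + \sigma^{-2}\mb{K}_t)$ with the information gain $I(\mb{y}_t; f)$. Combining this with the bias bound and folding constants yields $|f(\mb{x}) - \mu_t(\mb{x})| \le \beta_t^{1/2}\sigma_t(\mb{x})$ with $\beta_t^{1/2} = B + 4\sigma\sqrt{I(\mb{y}_t;f) + 1 + \ln(1/\delta)}$, where the $+1$ and the precise constant $4$ are artifacts of bounding $\ln\det$ terms and the choice of regularizer relative to $\sigma^2$.

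The main obstacle is purely bookkeeping: getting the constants and the regularization parameter to line up so that the $4$ and the $+1$ come out exactly as stated, and making sure the martingale bound is applied with a filtration for which $\mb{x}_t$ is predictable (so that active, data-dependent sampling is allowed) and the noise $\omega_t$ is conditionally $\sigma$-sub-Gaussian. Since the statement is quoted verbatim from the cited papers, I would not re-derive the self-normalized inequality from scratch but cite it, and spend the remaining effort only on the reduction (the bias/variance split and the $\ln\det \leftrightarrow I(\mb{y}_t; f)$ identity).
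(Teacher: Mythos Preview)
Your proposal is correct and matches the paper's treatment: the paper does not prove this lemma at all but simply cites \citet{Abbasi-Yadkori2012Online,Chowdhury2017Kernelized}, and the argument you sketch (bias/variance split via kernel ridge regression, Cauchy--Schwarz for the bias, self-normalized martingale concentration for the noise, and the $\tfrac12\ln\det(\mb{I}+\sigma^{-2}\mb{K}_t)=I(\mb{y}_t;f)$ identification) is precisely the proof in those references. Your observation that the filtration must make $\mb{x}_t$ predictable is exactly the point the paper later leans on when extending to time-varying kernels in \cref{thm:confidence_interval_extended}.
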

\cref{thm:confidence_interval} implies that, with high probability, the true function~$f$ is contained in the confidence intervals induced by the posterior GP distribution that uses the kernel~$k$ from~\cref{thm:confidence_interval} as a covariance function, scaled by an appropriate factor~$\beta_t$. Here, $I(\mb{y}_t; f)$ denotes the mutual information between the GP prior on~$f$ and the~${t}$ measurements~$\mb{y}_{t}$. Intriguingly, for GP models this quantity only depends on the inputs~$\mb{x}_t$ and not the corresponding measurement~$y_t$. Specifically, for a given set of measurements~$\mb{y}_\mathcal{A}$ at inputs~$\mb{x} \in \mathcal{A}$, the mutual information is given by
\begin{equation}
  I(\mb{y}_\mathcal{A}; f) = 0.5 \log | \mb{I} + \sigma^{-2} \mb{K}_{\mathcal{A}} | ,
  \label{eq:mutual_information}
\end{equation}
where~$\mb{K}_\mathcal{A}$ is the kernel matrix~$[k(\mb{x}, \mb{x}')]_{\mb{x}, \mb{x}' \in \mathcal{A}}$ and $|\cdot|$ is the determinant.
Intuitively, the mutual information measures how informative the collected samples~$\mb{y}_\mathcal{A}$ are about the function~$f$. If the function values are independent of each other under the GP prior, they will provide large amounts of new information. However, if measurements are taken close to each other as measured by the kernel, they are correlated under the GP prior and provide less information.

\subsection{Bayesian Optimization (BO)}
\label{sec:bayesian_optimization}

BO aims to find the global maximum of an unknown function~\citep{Mockus2012Bayesian}. The framework assumes that evaluating the function is expensive in terms of time required or monetary costs, while other computational resources are comparatively inexpensive.
In general, BO methods model the objective function~$f$ with a statistical model and use it to determine informative sample locations. A popular approach is to model the underlying function with a GP, see~\cref{sec:gaussian_process}.
GP-based BO methods use the posterior mean and variance predictions in~\cref{eq:gp_prediction_mean,eq:gp_prediction_variance} to compute the next sample location.

One commonly used algorithm is the~\textsc{GP-UCB} algorithm by~\cite{Srinivas2012Gaussian}. It uses confidence intervals on the function~$f$, e.g., from~\cref{thm:confidence_interval}, in order to select as next input the point with the largest plasuble function value according to the model,
\begin{equation}
\mb{x}_{t+1} = \underset{\mb{x} \in \mathcal{D}}{\mathrm{argmax}}~ \mu_{t}(\mb{x}) + \beta_t^{1/2} \sigma_{t}(\mb{x}).
\label{eq:gp_ucb}
\end{equation}
Intuitively,~\cref{eq:gp_ucb} selects new evaluation points at locations where the upper bound of the confidence interval of the GP estimate is maximal.
Repeatedly evaluating the function~$f$ at inputs~$\mb{x}_{t+1}$ given by~\cref{eq:gp_ucb} improves the mean estimate of the underlying function and decreases the uncertainty at candidate locations for the maximum, so that the global maximum is provably found eventually~\citep{Srinivas2012Gaussian}.
While~\cref{eq:gp_ucb} is also an optimization problem, it only depends on the GP model of~$f$ and solving it therefore does not require any expensive evaluations of~$f$. 

\paragraph{Regret bounds}

\cite{Srinivas2012Gaussian} show that the~\textsc{GP-UCB} algorithm has cumulative regret~$R_t = \bigO(\sqrt{ t \beta_t \gamma_t} )$ for all $t \geq 1$ with the same~$(1-\delta)$ probability as the confidence intervals, e.g., in~\cref{thm:confidence_interval}, hold.
Here~$\gamma_t$ is the largest amount of mutual information that could be obtained by any algorithm from at most~$t$ measurements,
\begin{equation}
  \gamma_t = \max_{\mathcal{A} \subset D, \, |\mathcal{A}| \leq t} I(\mb{y}_\mathcal{A}; f).
  \label{eq:gamma_t}
\end{equation}
We refer to~$\gamma_t$ as the \emph{information capacity}, since it can be interpreted as a measure of complexity of the function class associated with a GP prior. It was shown by~\cite{Srinivas2012Gaussian} that~$\gamma_t$ has a sublinear dependence on~$t$ for many commonly used kernels such as the Gaussian kernel. As a result,~$R_t$ has a sublinear dependence on~$t$ so that~$R_t / t \to 0$ and therefore \textsc{GP-UCB} converges to function evaluations close to~$f(\mb{x}^*)$. These regret bounds were extended to Thompson sampling, an algorithm that uses samples from the posterior GP as the acquisition function, by~\cite{Chowdhury2017Kernelized}.

\paragraph{Online hyperparameter estimation}

\begin{figure*}[t]
\centering
\subcaptionbox{Sample from GP prior. \label{fig:mcmc_example_sample}}
    {\includegraphics{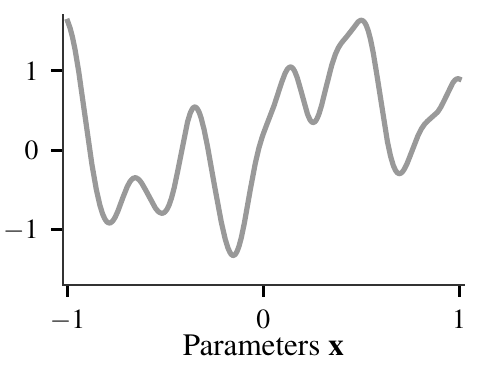}}
\subcaptionbox{GP estimate (RKHS). \label{fig:mcmc_example_map}}
    {\includegraphics{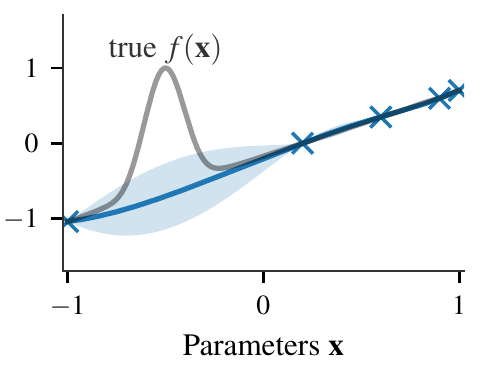}}
\subcaptionbox{Lengthscale distribution. \label{fig:mcmc_example_mcmc}}
    {\includegraphics{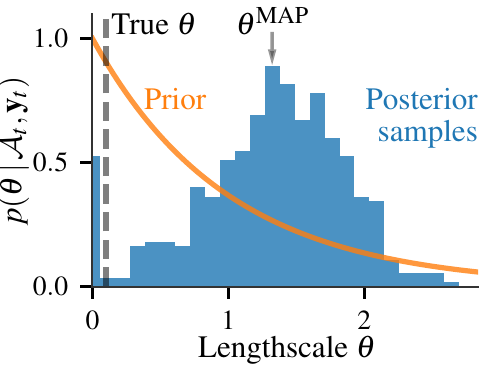}}
\caption{A sample from the GP prior in~\cref{fig:mcmc_example_sample} typically varies at a consistent rate over the input space. However, RKHS functions with the same kernel may be less consistent and can have bumps, as in~\cref{fig:mcmc_example_map} (gray). As a result, inferring the posterior lengthscales based on measurements (blue crosses in~\cref{fig:mcmc_example_map}) can lead to erroneous results. In~\cref{fig:mcmc_example_mcmc}, most of the probability mass of the posterior lengthscales has concentrated around large lengthscales that encode smooth functions. Consequently, the GP's $2\sigma$ confidence intervals in~\cref{fig:mcmc_example_map} (blue shaded) based on the posterior samples do not contain the true function.}
\label{fig:mcmc_example}
\end{figure*}

In the previous section, we have seen that the~\textsc{GP-UCB} algorithm provably converges. However, it requires access to a RKHS norm bound $\|f\|_\theta \leq B$ under the correct kernel hyperparameters~$\theta$ in order to construct reliable confidence intervals using~\cref{thm:confidence_interval}. In practice, these are unknown and have to be estimated online, e.g., based on a prior distribution placed on~$\theta$. Unfortunately, it is well-known that online estimation of the inputs, be it via maximum a posteriori (MAP) or sampling methods, does not always converge to the optimum~\citep{Bull2011Convergence}. The problem does not primarily lie with the inference scheme, but rather with the assumptions made by the GP. In particular, typical samples drawn from a GP with a stationary kernel tend to have a similar rate of change throughout the input space, see~\cref{fig:mcmc_example_sample}. In contrast, the functions inside the RKHS, as specified in~\cref{sec:problem_statement}, can have different rates of change and are thus improbable under the GP prior. For example, the grey function in~\cref{fig:mcmc_example_map} is almost linear but has one bump that defines the global maximum, which makes this function an improbable sample under the GP prior even though it belongs to the RKHS induced by the same kernel. This property of GPs \added{with stationary kernels} means that, for inference, it is sufficient to estimate the lengthscales in a small part of the state-space in order to make statements about the function space globally. This is illustrated in~\cref{fig:mcmc_example_mcmc}, where we show samples from the posterior distribution over the lengthscales based on the measurements obtained from the \textsc{GP-UCB} algorithm in~\cref{fig:mcmc_example_map} (blue crosses). Even though the prior distribution on the lengthscales~$\theta$ is suggestive of short lengthscales, most of the posterior probability mass is concentrated around lengthscales that are significantly larger than the true ones. As a result, even under model averaging over the samples from the posterior distribution of the lengthscales, the GP confidence intervals do not contain the true function in~\cref{fig:mcmc_example_map}. This is not a problem of the inference method applied, but rather a direct consequence of the probabilistic model that we have specified \added{based on the stationary kernel}, which does not consider functions with different rates of change to be likely.


\section{The \algname~Algorithm}
\label{sec:theory}

In this section, we extend the \textsc{GP-UCB} algorithm to the case where neither the norm bound~$B$ nor the lengthscales~$\theta$ are known.
In this case, it is always possible that the local optimum is defined by a local bump based on a kernel with small lengthscales, which has not been encountered by the data points as in~\cref{fig:mcmc_example_map}. The only solution to avoid this problem is to keep exploring to eventually cover the input space $\mathcal{D}$ \citep{Bull2011Convergence}. We consider expanding the function space associated with the hyperparameters slowly over time, so that we obtain sublinear regret once the true function class has been identified. Intuitively, this can help BO algorithms avoid premature convergence to local optima caused by misspecified hyperparameters~$\theta$ and~$B$. For example, in~\cref{fig:bo_example_1}, the \textsc{GP-UCB} algorithm has converged to a local maximum. By decreasing the lengthscales, we increase the underlying function class, which means that the GP confidence intervals on the function increase. This enables \textsc{GP-UCB} to explore further so that the global optimum is found, as shown in~\cref{fig:bo_example_3}.

\begin{figure*}[t]
\centering
\subcaptionbox{Stuck in local optimum. \label{fig:bo_example_1}}
    {\includegraphics{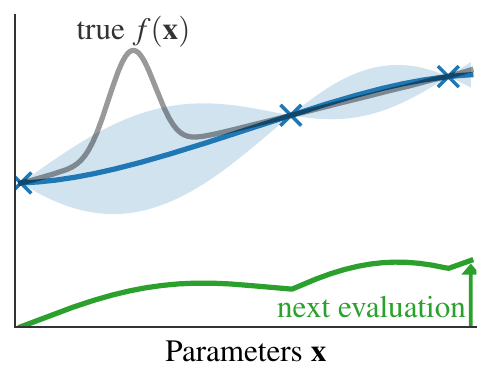}}
\subcaptionbox{Expanding the function class. \label{fig:bo_example_2}}
    {\includegraphics{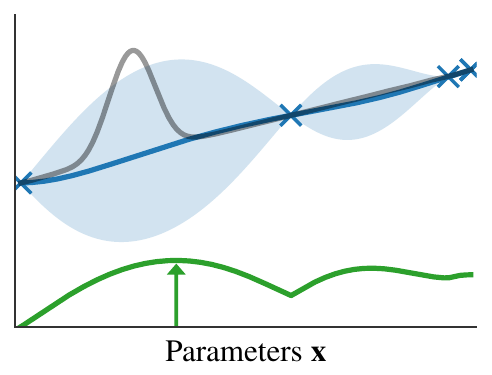}}
\subcaptionbox{Global optimum found. \label{fig:bo_example_3}}
    {\includegraphics{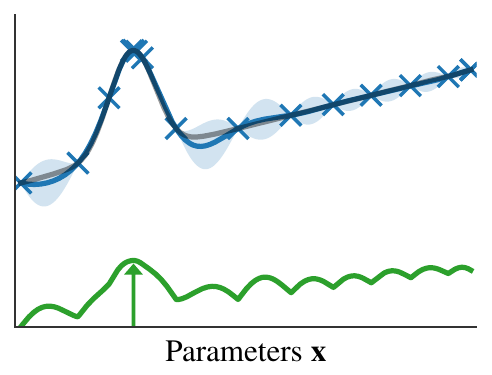}}
\caption{BO algorithms get stuck in local optima when the hyperpararameters of the model are misspecified. In~\cref{fig:bo_example_1}, the true function is not contained within the GP's confidence intervals (blue shaded), so that~$\textsc{GP-UCB}$ only collects data at the local optimum on the right (green arrow), see also~\cref{fig:mcmc_example}. Our method expands the function class over time by scaling the hyperparameters, which encourages additional exploration in~\cref{fig:bo_example_2}. The function class grows slowly enough, so that the global optimum is provably found in~\cref{fig:bo_example_3}.}
\label{fig:bo_example}
\end{figure*}

Specifically, we start with an initial guess~$\theta_0$ and~$B_0$ for the lengthscales and norm bound on~$f$, respectively. Over the iterations, we scale down the lengthscales and scale up the norm bound,
\begin{equation}
  \theta_{t} = \frac{1}{g(t)} \, \theta_0, \qquad B_t = b(t) g(t)^d \, B_0,
  \label{eq:temporal_lengthscales_and_norm}
\end{equation}
where~$g \colon \mathbb{N} \to \mathbb{R}_{> 0}$ and~$b \colon \mathbb{N} \to \mathbb{R}_{> 0}$ with~$b(0)=g(0)=1$ are functions that can additionally depend on the data collected up to iteration~$t$, $\mathcal{A}_t$ and~$\mb{y}_t$. As~$g(t)$ increases, the lengthscales~$\theta_t$ of the kernel become shorter, which enlarges the underlying function space:
\begin{lemma}{\cite[Lemma 4]{Bull2011Convergence}}
  If $f \in \mathcal{H}_\theta$, then $f \in \mathcal{H}_{\theta'}$ for all $0 < \theta' \leq \theta$, and
  \begin{equation}
    \| f \|^2_{\mathcal{H}_{\theta'}} \leq \left( \prod_{i=1}^d  \frac{[\theta]_i}{[\theta']_i} \right) \| f \|^2_{\mathcal{H}_{\theta}} \,.
    \label{eq:rkhs_norm_change}
  \end{equation}
  \label{thm:rkhs_norm_change}
\end{lemma}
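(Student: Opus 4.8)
The plan is to deduce both assertions from a single kernel-domination fact: that $\big(\prod_{i=1}^d [\theta]_i/[\theta']_i\big)\,k_{\theta'} - k_\theta$ is itself a positive-definite kernel. Granting this, the statement follows from classical RKHS facts: scaling a kernel by $\lambda>0$ leaves its RKHS unchanged as a set of functions while replacing $\|f\|^2$ by $\lambda^{-1}\|f\|^2$, and by Aronszajn's sum theorem, if $k_2 - k_1$ is positive definite then $\mathcal{H}_{k_1}\subseteq\mathcal{H}_{k_2}$ with $\|f\|_{\mathcal{H}_{k_2}}\le\|f\|_{\mathcal{H}_{k_1}}$. Applying these with $k_1 = k_\theta$ and $k_2 = \big(\prod_i [\theta]_i/[\theta']_i\big)k_{\theta'}$ yields $f\in\mathcal{H}_\theta \Rightarrow f\in\mathcal{H}_{\theta'}$ together with the norm bound~\cref{eq:rkhs_norm_change}, with exactly the stated constant (which is at least $1$ since $\theta'\le\theta$ componentwise).

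To establish the domination I would use Bochner's theorem. Since $k$ is continuous, bounded and positive definite, there is a finite nonnegative measure $\mu$ on $\mathbb{R}^d$ with $k(\mb{z}) = \int e^{\mathrm{i}\,\bs{\omega}^{\T}\mb{z}}\,\dif\mu(\bs{\omega})$. Dividing the $i$th input coordinate by $[\theta]_i$ as in~\cref{eq:stationary_lengthscale_kernel} replaces $\mu$ by its pushforward $\mu_\theta$ under the corresponding coordinatewise dilation of frequency space; when $\mu$ has a density $\rho$ this reads $\mu_\theta(\dif\bs\omega) = \big(\prod_i [\theta]_i\big)\,\rho(\theta\odot\bs\omega)\,\dif\bs\omega$, where $\odot$ denotes the componentwise product. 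A difference of stationary kernels is positive definite precisely when the associated difference of spectral measures is nonnegative, so it suffices to verify $\big(\prod_i [\theta]_i/[\theta']_i\big)\mu_{\theta'} - \mu_\theta \ge 0$. Carrying out the change of variables, this reduces (in the density case) to the pointwise inequality $\rho(\theta'\odot\bs\omega)\ge\rho(\theta\odot\bs\omega)$ for all $\bs\omega$, which holds because $\theta'\le\theta$ componentwise forces $|[\theta'\odot\bs\omega]_i|\le|[\theta\odot\bs\omega]_i|$ and the spectral density of $k$ is radially (or coordinatewise) non-increasing, so that contracting its argument towards the origin does not decrease it — this is the case for all the stationary kernels used here (Gaussian, Mat\'ern, and their products). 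The general, non-absolutely-continuous case is handled by phrasing the same monotonicity directly for the pushforward measures.

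The main obstacle is isolating exactly which property of the base kernel the argument needs: one requires the spectral measure of $k$ to be monotone under coordinatewise contraction of frequencies, which is a genuine hypothesis rather than a triviality and can fail for a general stationary kernel (e.g.\ one with an oscillatory spectral density). It holds for every kernel used in the paper, but should be flagged when invoked. A second, purely routine difficulty is bookkeeping: carrying the Jacobian factors $\prod_i[\theta]_i$ through the change of variables so that the constant comes out as exactly $\prod_i[\theta]_i/[\theta']_i$ and not merely proportional to it, and justifying the Fourier manipulations for an arbitrary $f\in\mathcal{H}_\theta$ — most cleanly by first arguing on the dense subspace of finite kernel combinations $\sum_i\alpha_i k_\theta(\cdot,\mb{x}_i)$, where everything is explicit, and then passing to the limit using the norm bound just derived. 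This reproduces Lemma~4 of~\citet{Bull2011Convergence}.
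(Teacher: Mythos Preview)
The paper does not prove this lemma; it is quoted directly from \citet{Bull2011Convergence} without argument, so there is no in-paper proof to compare against. Your proposal is correct and is essentially Bull's own proof, repackaged: Bull uses the Fourier-integral representation of the RKHS norm, $\|f\|^2_{\mathcal{H}_\theta}\propto\int |\hat f(\bs\omega)|^2/\hat k_\theta(\bs\omega)\,\dif\bs\omega$, and compares the spectral densities directly, while you route the same spectral comparison through Aronszajn's domination theorem; both reduce to the pointwise inequality $\rho(\theta'\odot\bs\omega)\ge\rho(\theta\odot\bs\omega)$ and produce the identical constant $\prod_i[\theta]_i/[\theta']_i$.

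You are also right to flag the monotonicity of the spectral density as a genuine hypothesis rather than a free fact. Bull states this explicitly as an assumption on the kernel class (his Assumption~2, which covers products of one-dimensional kernels with non-increasing spectral densities), and the paper implicitly inherits that restriction—it holds for the Gaussian and Mat\'ern kernels used here, but would fail for, say, a stationary kernel with oscillatory spectrum. Your bookkeeping with the Jacobian is correct, and the density-case argument suffices for all kernels appearing in the paper.
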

\cref{thm:rkhs_norm_change} states that when decreasing the lengthscales~$\theta$, the resulting function space contains the previous one. Thus, as~$g(t)$ increases we consider larger RKHS spaces as candidate spaces for the function~$f$. In addition, as we increase~$b(t)$, we consider larger norm balls within the function space~$\mathcal{H}_{\theta_t}$, which corresponds to more complex functions. However, it follows from \cref{eq:rkhs_norm_change} that, as we increase~$g(t)$, we also increase the norm of any existing function in~$\mathcal{H}_{\theta_0}$ by at most a factor of~$g(t)^d$. This is illustrated in~\cref{fig:norm_balls}: as we scale up the norm ball to~$b(t)B_0$, we capture~$f$ under the initial lengthscales~$\theta_0$. However, by shortening the lengthscales by~$g(t)$, the function~$f$ has a larger norm in the new function space~$\mathcal{H}_{\theta_t} = \mathcal{H}_{\theta_0 / g(t)}$. We account for this through the additional scaling factor~$g(t)^d$ in the norm bound~$B_t$ in~\cref{eq:temporal_lengthscales_and_norm}.

\begin{figure*}[t]
\centering
\subcaptionbox{Scaling of the norm bound. \label{fig:norm_balls}}
    {\includegraphics{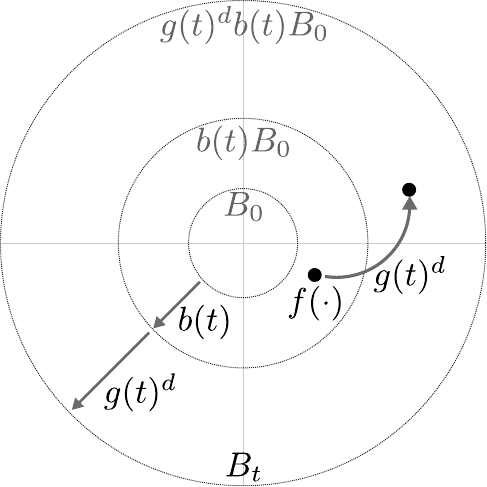}}
\hfill
\subcaptionbox{Cumulative regret with scaling. \label{fig:cumulative_regret}}
    {\includegraphics{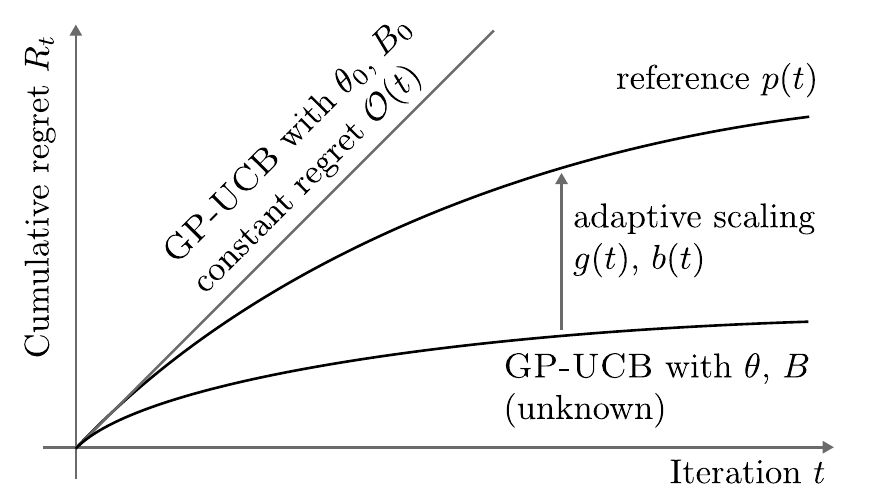}}
\caption{The function $f$ in~\cref{fig:norm_balls} has RKHS norm $\|f\|_{\theta_0} > B_0$. To account for this, we expand the norm ball by~$b(t)$ over time. When we scale down the lengthscales by~$g(t)$, the norm of~$f$ in the resulting RKHS is larger, see~\cref{thm:rkhs_norm_change}. We account for this when defining the norm ball~$B_t$ in~\cref{eq:temporal_lengthscales_and_norm}. In~\cref{fig:cumulative_regret}, the \textsc{GP-UCB} algorithm based on the misspecified hyperparameters $B_0$ and $\theta_0$ does not converge (constant regret). Our method scales the lengthscales and norm bound by $g(t)$ and $b(t)$, so that we eventually capture the true model. Scaling the hyperparameters beyond the true ones leads to additional exploration and thus larger cumulative regret than~\textsc{GP-UCB} with the true, unknown hyperparameters $\theta$ and $B$. However, as long as the cumulative regret is upper bounded by a sublinear function~$p$, ultimately the~\algabbrv~ algorithm converges to the global optimum.}
\label{fig:algorithm_intuition}
\end{figure*}

\paragraph{Theoretical analysis}

Based on the previous derivations together with~\cref{thm:rkhs_norm_change}, it is clear that, if~$g(t)$ and~$b(t)$ are monotonically increasing functions and~$f \in \mathcal{H}_{\theta_{t^*}}$ with~$\|f\|_{\theta_{t^*}} \leq B_{t^*}$ for some~$t^* > 0$, then~$f \in \mathcal{H}_{\theta_{t}}$ and~$\|f\|_{\theta_{t}} \leq B_{t}$ for all~$t \geq t^*$. That is, once the function~$f$ is contained within the norm ball of~$B_{t^*}$ for the lengthscales~$\theta_{t^*}$, then, for any further increase in~$b(t)$ or~$g(t)$, the function~$f$ is still contained in the candidate space~$\{f \in \mathcal{H}_{\theta_t} \,|\, f \leq B_t\}$. Based on this insight, we propose~\textsc{A-GP-UCB} in~\cref{alg:a_gp_ucb}. At iteration~$t$, \textsc{A-GP-UCB} sets the GP lengthscales to~$\theta_t$ and selects new inputs~$\mb{x}_{t+1}$ similar to the~\textsc{GP-UCB} algorithm, but based on the norm bound~$B_t$. We extend the analysis of~\textsc{GP-UCB} and~\cref{thm:confidence_interval} to obtain our main result.
\begin{theorem}
  Assume that~$f$ has bounded  RKHS norm~$ \| f \|_{k_\theta}^2 \leq B $ in a RKHS that is parametrized by a stationary kernel $k_\theta(\mb{x}, \mb{x}')$ with unknown lengthscales $\theta$. Based on an initial guess, $\theta_0$ and~$B_0$, define monotonically increasing functions~$g(t)>0$ and~$b(t)>0$ and run~\textsc{A-GP-UCB} with $\beta_t^{1/2} = b(t) g(t)^d B_0 + 4 \sigma \sqrt{ I_{\theta_t}(\mb{y}_t; f) + 1 + \mathrm{ln}(1 / \delta)}$ and GP lengthscales~$\theta_t = \theta_0 / g(t)$. Then, with probability at least~$(1-\delta)$, we obtain a regret bound of
\begin{equation}
   R_t \leq 2 B \max\left( g^{-1}\left(\max_i \frac{[\theta_0]_i} {[\theta]_i} \right),\, b^{-1}\left( \frac{B}{B_0} \right) \right)
  +\sqrt{ C_1 t \beta_t I_{\theta_t}(\mb{y}_{t}; f) } ,
  \label{eq:thm:regret}
\end{equation}
where~$I_{\theta_t}$ is the mutual information in~\cref{eq:mutual_information} based on the GP model with lengthscales~$\theta_t$ and~$C_1 = 8 / \log(1 + \sigma^{-2} )$.
  \label{thm:main}
\end{theorem}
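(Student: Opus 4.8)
The plan is to decompose the cumulative regret at a (generally unknown and possibly data-dependent) \emph{capture time} $t^\ast$, after which the adapted lengthscales and norm bound provably dominate the true ones. Before $t^\ast$ the instantaneous regret is bounded by a constant; after $t^\ast$ the hypotheses of \cref{thm:confidence_interval} hold for the GP model the algorithm is currently using, and a \textsc{GP-UCB}-type argument goes through.

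First I record two elementary facts. Because $k_\theta(\mb{x},\mb{x})=1$, the reproducing property gives $|f(\mb{x})| \le \|f\|_{k_\theta}\sqrt{k_\theta(\mb{x},\mb{x})}\le B$, so $r_t\le 2B$ for every $t$. Second, define
\[
  t^\ast \;=\; \max\!\Big(\, g^{-1}\big(\max\nolimits_i [\theta_0]_i/[\theta]_i\big),\ b^{-1}(B/B_0)\,\Big),
  \qquad g^{-1}(v):=\min\{t\in\mathbb{N}: g(t)\ge v\}.
\]
For $t\ge t^\ast$, monotonicity of $g$ gives $g(t)\ge\max_i[\theta_0]_i/[\theta]_i$, hence $\theta_t=\theta_0/g(t)\le\theta$ componentwise, so \cref{thm:rkhs_norm_change} yields $f\in\mathcal{H}_{\theta_t}$ with
\[
  \|f\|_{\theta_t}^2 \;\le\; \Big(\textstyle\prod_i \tfrac{[\theta]_i}{[\theta_t]_i}\Big)\|f\|_{k_\theta}^2 \;=\; g(t)^d\Big(\textstyle\prod_i \tfrac{[\theta]_i}{[\theta_0]_i}\Big)\|f\|_{k_\theta}^2 .
\]
Together with monotonicity of $b$ and $b(t)\ge B/B_0$, the factor $g(t)^d$ is designed to cancel against the $g(t)^d$ appearing in $B_t=b(t)g(t)^dB_0$, so that $\|f\|_{\theta_t}\le B_t$; that is, $f$ lies inside the candidate norm ball the algorithm uses at every step $t\ge t^\ast$. (I would double-check the exact powers and the $\prod_i[\theta]_i/[\theta_0]_i$ bookkeeping against the definition of $B_t$ here.) The pre-capture regret is therefore $\sum_{s\le t^\ast}r_s\le 2B\,t^\ast$, the first term of \cref{eq:thm:regret}.

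For $s\ge t^\ast$ the algorithm runs \textsc{GP-UCB} with a GP prior of lengthscale $\theta_s$, exploration weight $\beta_s^{1/2}=B_s+4\sigma\sqrt{I_{\theta_s}(\mb{y}_s;f)+1+\ln(1/\delta)}$, and $\|f\|_{\theta_s}\le B_s$, so \cref{thm:confidence_interval} gives, with probability at least $1-\delta$, $|f(\mb{x})-\mu_s(\mb{x})|\le\beta_s^{1/2}\sigma_s(\mb{x})$ for all $\mb{x}$, where $\mu_s,\sigma_s$ are the posterior quantities for lengthscale $\theta_s$. On this event the usual chain applies: by the selection rule~\cref{eq:gp_ucb}, $f(\mb{x}^\ast)\le\mu_s(\mb{x}^\ast)+\beta_s^{1/2}\sigma_s(\mb{x}^\ast)\le\mu_s(\mb{x}_{s+1})+\beta_s^{1/2}\sigma_s(\mb{x}_{s+1})$ and $f(\mb{x}_{s+1})\ge\mu_s(\mb{x}_{s+1})-\beta_s^{1/2}\sigma_s(\mb{x}_{s+1})$, hence $r_{s+1}\le 2\beta_s^{1/2}\sigma_s(\mb{x}_{s+1})$. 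Squaring, summing over $t^\ast\le s<t$, and using that $\beta_s$ is nondecreasing together with Cauchy--Schwarz bounds the post-capture regret by $2\sqrt{\,t\,\beta_t\sum_{s<t}\sigma_s^2(\mb{x}_{s+1})\,}$; the elementary inequality $\sigma^{-2}v\le\frac{1}{\log(1+\sigma^{-2})}\log(1+\sigma^{-2}v)$ for $v\in[0,1]$ (applicable since $\sigma_s^2(\mb{x})\le k_{\theta_s}(\mb{x},\mb{x})=1$) together with the chain-rule identity $I_{\theta_t}(\mb{y}_t;f)=\frac12\sum_{s<t}\log(1+\sigma^{-2}\sigma_s^2(\mb{x}_{s+1}))$ bounds $\sum_{s<t}\sigma_s^2(\mb{x}_{s+1})$ by $\frac{2}{\log(1+\sigma^{-2})}I_{\theta_t}(\mb{y}_t;f)$. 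Substituting gives the term $\sqrt{C_1\,t\,\beta_t\,I_{\theta_t}(\mb{y}_t;f)}$ with $C_1=8/\log(1+\sigma^{-2})$, and adding the two pieces yields~\cref{eq:thm:regret}.

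The genuinely delicate point --- and where I expect the real work --- is that the GP model $k_{\theta_t}$ changes at every iteration, and because $g$ and $b$ may depend on past data this sequence of kernels is itself random and adapted to the history, whereas both \cref{thm:confidence_interval} and the telescoping identity for $\sum_s\sigma_s^2(\mb{x}_{s+1})$ are stated for a \emph{fixed} kernel and a \emph{fixed} design. I would resolve this by exploiting that $g$ is monotone, so $\theta_t$ is nonincreasing: the posterior variance actually incurred at step $s$ under lengthscale $\theta_s$ is dominated by the posterior variance one would obtain running inference throughout with the shortest lengthscale $\theta_t$ on the same design (shrinking a lengthscale only inflates posterior variance), which is what lets the \emph{final-model} information gain $I_{\theta_t}(\mb{y}_t;f)$ control the variance sum; and the self-normalized concentration underlying \cref{thm:confidence_interval} can be applied at the final model, or uniformly over the monotone family of lengthscales visited up to $t$ (if necessary after restricting the range of $g$ to a countable set). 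Turning these domination statements into clean inequalities --- a monotonicity of $\sigma_t(\mb{x})$ in the lengthscales, and a uniform-over-models version of the confidence bound --- is the crux; everything else is the bookkeeping sketched above.
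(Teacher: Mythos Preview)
Your proposal matches the paper's proof in structure: the same split at the capture time $t^\ast$, the same $r_t\le 2B$ bound from $\|f\|_\infty\le\|f\|_{k_\theta}$ before capture, and the same \textsc{GP-UCB} chain $r_{s+1}\le 2\beta_s^{1/2}\sigma_s(\mb{x}_{s+1})$ followed by Cauchy--Schwarz afterward. The one place the paper proceeds differently is precisely the point you flag as delicate. Rather than a posterior-variance monotonicity argument or a union bound over the visited lengthscales, the paper isolates a separate lemma asserting that \cref{thm:confidence_interval} holds verbatim when $k_{\theta_t}$ is any \emph{predictable} sequence of kernels with $\|f\|_{k_{\theta_t}}\le B_t$: the self-normalized martingale construction of \citet{Abbasi-Yadkori2012Online} and \citet{Chowdhury2017Kernelized} never uses that the kernel is fixed across rounds, only that at round $t$ it is determined by the history, so the proof carries over unchanged. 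This is cleaner than your uniform-over-models route and costs no extra $\delta$. For the variance-sum-to-information-gain step, the paper simply cites Lemmas~5.2--5.4 of \citet{Srinivas2012Gaussian} without comment on the changing kernel, so the issue you raise --- that the chain-rule identity for $I_{\theta_t}(\mb{y}_t;f)$ requires all $\sigma_s$ to be computed under the single kernel $\theta_t$, not the $\theta_s$ the algorithm actually used --- is one the paper leaves implicit. Your proposed fix via monotonicity of posterior variance in the lengthscales is a reasonable way to close that gap, but be aware that this monotonicity is not automatic for arbitrary stationary kernels and would itself need justification.
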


The proof is given in the appendix. Intuitively, the regret bound in~\cref{eq:thm:regret} splits the run of the algorithm into two distinct phases. In the first one, either the RKHS space~$\mathcal{H}_{\theta_t}(\mathcal{D})$ or the norm bound~$B_t$ are too small to contain the true function~$f$. Thus, the GP confidence intervals scaled by~$\beta_t^{1/2}$ do not necessarily contain the true function~$f$, as in~\cref{fig:mcmc_example_map}. In these iterations, we obtain constant regret that is bounded by~$2B$, since~$\|f\|_\infty \leq \|f\|_{\theta} \leq B$. After both~$g$ and~$b$ have grown sufficiently in order for the considered function space to contain the true function, the confidence bounds are reliable and we can apply the theoretical results of the~\textsc{GP-UCB} algorithm. This is illustrated in~\cref{fig:cumulative_regret}: If the initial hyperparameters~$\theta_0$ and~$B_0$ are misspecified, the confidence intervals do not contain~$f$ and \textsc{GP-UCB} does not converge. We avoid this problem by increasing~$b(t)$ and~$g(t)$ over time, so that we eventually contain $f$ in our function class. However, increasing the norm ball and decreasing the lengthscales beyond the true ones causes additional exploration and thus additional cumulative regret relative to \textsc{GP-UCB} with the true, unknown hyperparameters. This additional regret represents the cost of not knowing the hyperparameters in advance. As long as the overall regret remains bounded by a sublinear function~$p(t)$, our method eventually converges to the global optimum.
The regret bound in~\cref{eq:thm:regret} depends on the true hyperparameters~$\theta$ and~$B$. However, the algorithm does not depend on them. \cref{thm:main} provides an instance-specific bound, since the mutual information depends on the inputs in~$\mathcal{A}_t$. One can obtain a worst-case upper bound by bounding~$I_{\theta_t}(\mb{y}_t; f) \leq \gamma_t(\theta_t)$, which is the worst-case mutual information as in~\cref{eq:gamma_t}, but based on the GP model with lengthscales~$\theta_t$.  While \cref{thm:main} assumes that the noise properties are known, the results can be extended to estimate the noise similar to~\citet{Durand2018Streaming}.

\begin{algorithm}[t]
  \caption{\algname (\algabbrv)}
  \begin{algorithmic}[1]
    \STATE{} \textbf{Input:} Input space~$\mathcal{D}$, $GP(0, k(\mb{x}, \mb{x}'))$, functions~$g(t)$ and~$b(t)$ \\
    \STATE{} Set $B_0 = 1$ and $\theta_0 = \mathrm{diam}(\mathcal{D})$
    \FORALL{$t = 0, 1, 2, \dots$}
      \STATE{} Set the GP kernel lengthscsales to~$\theta_{t} = \theta_0 / g(t)$
      \STATE{} $\beta_t^{1/2} \gets B(t) + 4 \sigma \sqrt{ I_{\theta_t}(\mb{y}_t; f) + 1 + \mathrm{ln}(1 / \delta)}$ with $B(t) = b(t) g(t)^d B_0$
      \STATE{} Choose $\mb{x}_{t+1} = \argmax_{\mb{x} \in \mathcal{D}} \, \mu_{t}(\mb{x}) + \beta_t^{1/2} \sigma_{t}(\mb{x})$
      \STATE{} Evaluate $y_{t+1} = f(\mb{x}_{t+1}) + \epsilon_{t+1}$
      \STATE{} Perform Bayesian update to obtain~$\mu_{t+1}$ and~$\sigma_{t+1}$
    \ENDFOR{}
  \end{algorithmic}
\label{alg:a_gp_ucb}
\end{algorithm}

For arbitrary functions~$g(t)$ and~$b(t)$, the candidate function space $\{f \in \mathcal{H}_{\theta_t} \,|\, f \leq B_t\}$ can grow at a faster rate than it contracts by selecting informative measurements~$y_t$ according to~\cref{eq:gp_ucb}. In particular, in the regret term~$\sqrt{C_1 t \beta_t \gamma_t}$ both~$\beta_t$ and~$\gamma_t$ depend on the scaling factors~$g(t)$ and~$b(t)$. If these factors grow at a faster rate than~$\sqrt{t}$, the resulting algorithm does not enjoy sublinear regret. We have the following result that explicitly states the dependence of~$\gamma_t$ on the scaling factor~$g(t)$.
\begin{proposition}
  Let~$k_\theta$ be a stationary kernel parameterized by lengthscales~$\theta$ as in~\cref{eq:stationary_lengthscale_kernel} and define~$\gamma_t(\theta)$ for lengthscales~$\theta$ as in~\cref{eq:gamma_t}. Define the lengthscales as~$\theta_t = \theta_0 / g(t)$ as in~\cref{eq:temporal_lengthscales_and_norm}.
  \begin{itemize}
    \item If~$k(\mb{x}, \mb{x}')=\mathrm{exp}(-\frac{1}{2}\|\mb{x} - \mb{x}'\|_2^2)$ is the squared exponential (Gaussian) kernel, then \begin{equation}
      \gamma_t(\theta_t) = \bigO \left( g(t)^{d}(\log t)^{d+1} \right)
      \label{eq:gamma_t_gaussian}
    \end{equation}
    \item If~$k(\mb{x}, \mb{x}') = (2^{1-\nu} /\, \Gamma(\nu))\, r^\nu B_\nu(r)$ is the Mat\'ern kernel, where $r = \sqrt{2 \nu} \|\mb{x} - \mb{x}'\|_2$, $B_\nu$ is the modified Bessel function with $\nu > 1$, and~$\Gamma$ is the gamma function. Then
    \begin{equation}
      \gamma_t(\theta_t) = \bigO \left( g(t)^{2\nu + d} t^{\frac{d(d+1)}{ 2\nu + d(d + 1) }} \log t \right)
      \label{eq:gamma_t_matern}
    \end{equation}
  \end{itemize}
  \label{thm:gamma_t_lengthscale_bounds}
\end{proposition}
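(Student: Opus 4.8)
The plan is to turn the \emph{shrinking of the lengthscales} into an \emph{enlargement of the domain}, and then to cut the enlarged domain into unit-size pieces. Note that $\gamma_t(\theta)$ in~\cref{eq:gamma_t} depends on $k_\theta$ only through the Gram matrices $[k_\theta(\mb{x},\mb{x}')]_{\mb{x},\mb{x}'\in\mathcal{A}}$ over finite sets $\mathcal{A}\subseteq\mathcal{D}$. Since $k_\theta$ has the stationary form~\cref{eq:stationary_lengthscale_kernel}, the coordinate map $\phi(\mb{x}) = \diag\!\big(g(t)/[\theta_0]_1,\dots,g(t)/[\theta_0]_d\big)\,\mb{x}$ satisfies $k_{\theta_t}(\mb{x},\mb{x}') = k(\phi(\mb{x})-\phi(\mb{x}'))$; that is, $k_{\theta_t}$ on $\mathcal{D}$ is exactly the \emph{unit-lengthscale} kernel $k$ evaluated on the image set $\mathcal{D}_t := \phi(\mathcal{D})$. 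As $\phi$ is a bijection preserving all pairwise kernel values, $\gamma_t(\theta_t)$ equals $\gamma_t$ for the fixed kernel $k$ on $\mathcal{D}_t$, and since $\mathcal{D}$ is compact and $g(t)\ge 1$, the set $\mathcal{D}_t$ lies in an axis-aligned box of side length $L = \bigO(g(t))$.

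It then remains to bound $\gamma_t$ for $k$ on a box of side $L$. The step I would use here is a covering plus a determinant (Fischer) inequality: cover the box by $N = \bigO(L^d) = \bigO(g(t)^d)$ unit cubes $C_1,\dots,C_N$; for any $\mathcal{A}$ with $|\mathcal{A}|\le t$, set $\mathcal{A}_i = \mathcal{A}\cap C_i$ and order the points cube by cube, so that $\sigma^2\I + \mb{K}_\mathcal{A}$ is positive definite with diagonal blocks $\sigma^2\I + \mb{K}_{\mathcal{A}_i}$. Fischer's inequality gives $|\I + \sigma^{-2}\mb{K}_\mathcal{A}| \le \prod_i |\I + \sigma^{-2}\mb{K}_{\mathcal{A}_i}|$, hence $I(\mb{y}_\mathcal{A};f) \le \sum_i I(\mb{y}_{\mathcal{A}_i};f)$ by~\cref{eq:mutual_information}; maximizing over $\mathcal{A}$ and using stationarity of $k$ yields $\gamma_t\big(\text{box}\big) \le N\,\gamma_t\big(\text{unit cube}\big)$.

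The per-cube quantity is precisely the information capacity bounded by~\citet{Srinivas2012Gaussian} for a fixed domain of bounded diameter, namely $\bigO\big((\log t)^{d+1}\big)$ for the Gaussian kernel and $\bigO\big(t^{d(d+1)/(2\nu+d(d+1))}\log t\big)$ for the Mat\'ern-$\nu$ kernel, with constants depending only on $d,\nu,\sigma$. Multiplying by $N=\bigO(g(t)^d)$ gives~\cref{eq:gamma_t_gaussian} in the Gaussian case, and $\gamma_t(\theta_t) = \bigO\big(g(t)^d\, t^{d(d+1)/(2\nu+d(d+1))}\log t\big)$ in the Mat\'ern case, which implies~\cref{eq:gamma_t_matern} since $g(t)\ge 1$ and $2\nu>0$. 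If one wants the exponent $2\nu+d$ of $g(t)$ to appear directly rather than through the crude bound $g(t)^d\le g(t)^{2\nu+d}$, one would instead re-run Srinivas et al.'s Mercer-eigenvalue argument for the Mat\'ern kernel on the box of side $L$: the Mercer eigenvalues then carry a prefactor $\bigO(L^{2\nu})$ coming from the spectral density, and an extra $\bigO(L^d)$ enters through the domain discretization used in that argument, giving $\bigO(L^{2\nu+d})=\bigO(g(t)^{2\nu+d})$ overall.

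I expect the main obstacle to be the bookkeeping of the domain-size dependence rather than any new idea. Concretely, one must check that $\mathcal{D}_t$ genuinely fits in a box of side $\bigO(g(t))$ under the (possibly anisotropic) rescaling $\phi$, that the covering number is $\bigO(g(t)^d)$ \emph{uniformly in $t$}, and — most importantly — that the per-cube capacity bounds of~\citet{Srinivas2012Gaussian} hold with $t$-independent constants, so that multiplying by the growing factor $g(t)^d$ is valid. The delicate alternative route that yields $g(t)^{2\nu+d}$ directly additionally requires identifying exactly where the domain width enters Srinivas et al.'s (non-tight) Mat\'ern estimate, which is the one place where the argument is not a routine substitution.
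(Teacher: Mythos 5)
Your argument is correct, but it is a genuinely different route from the paper's. The paper proves the proposition spectrally: it restates the capacity bound in terms of the kernel's operator eigenvalues (\cref{thm:bound_gamma_with_spectrum}, Theorem 8 of \citealp{Srinivas2012Gaussian}) and then re-runs the eigendecay estimates of \citet{Seeger2008Information} for the squared exponential and Mat\'ern kernels while explicitly tracking how the constants ($c$, $\alpha^{-d}$, $C$) scale with $\theta_t=\theta_0/g(t)$; this is exactly the ``delicate alternative route'' you sketch at the end. You instead absorb the lengthscale into the domain via the bijection $\phi$, partition the inflated box into $N=\bigO(g(t)^d)$ unit cells, and use Fischer's block-determinant inequality on $\sigma^2\I+\mb{K}_\mathcal{A}$ to get subadditivity of the information capacity over cells, so the fixed-domain bounds of \citet{Srinivas2012Gaussian} can be cited as black boxes with $t$-independent constants (the unit cell is a fixed compact convex set, and stationarity makes all cells equivalent). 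Both arguments are sound, and for the Gaussian kernel they give the same $\bigO(g(t)^d(\log t)^{d+1})$. What your route buys is modularity — no lengthscale-dependent constant chasing through the Mercer spectrum — and, for the Mat\'ern kernel, a \emph{stronger} bound $\bigO\bigl(g(t)^{d}\,t^{d(d+1)/(2\nu+d(d+1))}\log t\bigr)$, which implies \cref{eq:gamma_t_matern} since $g(t)\geq 1$; by contrast the paper's bookkeeping through the spectral constants is precisely where its appendix is most fragile. What the paper's route buys is the explicit spectral dependence on $g(t)$, which the authors reuse later (e.g., the $(g(t)/g(t-1))^d$ rescaling of the mutual information in \cref{eq:gt_scaleinfo}). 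Your own caveats are the right ones and are routine to discharge: take half-open cells so each point lies in exactly one block, and note the covering constant depends only on $\mathcal{D}$ and $\theta_0$.
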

Proposition~\ref{thm:gamma_t_lengthscale_bounds} explicitly states the relationship between~$\gamma_t$ and~$g(t)$. For the Gaussian kernel, if we scale down the lengthscales by a factor of two, the amount of mutual information that we can gather in the worst case,~$\gamma_t$, grows by~$2^d$. Given the dependence of~$\gamma_t$ on~$g(t)$, we can refine~\cref{thm:main} to obtain concrete regret bounds for two commonly used kernels.
\begin{corollary}
  If, under the assumptions of~\cref{thm:main},~$g(t)$ and~$b(t)$ grow unbounded, then we obtain the following, high-probability regret bounds for~\cref{alg:a_gp_ucb}:
  \begin{itemize}
    \item Squared exponential kernel: $R_t \leq \bigO \left( b(t) \sqrt{t g(t)^{3d} \gamma_t(\theta_0)} + g(t)^d \gamma_t(\theta_0) \sqrt{t} \right)$;
    \item Mat\'ern kernel: $R_t \leq \bigO \left( b(t) \sqrt{t g(t)^{2 \nu + 3 d} \gamma_t(\theta_0)} + g(t)^{\nu + d} \gamma_t(\theta_0)\sqrt{t} \right)$.
  \end{itemize}
  \label{cor:concrete_regret_bounds}
\end{corollary}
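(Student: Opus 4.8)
The strategy is to feed the information-capacity bounds of \cref{thm:gamma_t_lengthscale_bounds} into the regret bound of \cref{thm:main} and then collect the powers of $t$, $g(t)$, $b(t)$, and $\log t$; no new idea is needed beyond careful bookkeeping, and everything takes place on the high-probability event on which \cref{thm:main} holds. First I would dispose of the data-independent summand $2B\max\!\big(g^{-1}(\max_i [\theta_0]_i/[\theta]_i),\, b^{-1}(B/B_0)\big)$ in \cref{eq:thm:regret}. Since $g$ and $b$ are monotone and, by assumption, grow unbounded, their inverses are finite at the fixed arguments $\max_i [\theta_0]_i/[\theta]_i$ and $B/B_0$, so this summand is a constant independent of $t$. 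Because $\beta_t$ and $I_{\theta_t}(\mb{y}_t;f)$ are bounded below by positive constants, the remaining summand $\sqrt{C_1 t \beta_t I_{\theta_t}(\mb{y}_t;f)}$ grows at least like $\sqrt t$, so the constant term is absorbed into its $\bigO(\cdot)$.

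Next I would pass to worst-case quantities via $I_{\theta_t}(\mb{y}_t;f)\le\gamma_t(\theta_t)$, used both where the mutual information appears explicitly and inside $\beta_t^{1/2}=b(t)g(t)^d B_0 + 4\sigma\sqrt{I_{\theta_t}(\mb{y}_t;f)+1+\ln(1/\delta)}$. Treating $B_0,\sigma,\delta$ (and $C_1$) as constants, this gives $\beta_t = \bigO\!\big(b(t)^2 g(t)^{2d} + \gamma_t(\theta_t)\big)$, and hence, using $\sqrt{a+b}\le\sqrt a + \sqrt b$,
\[
R_t \;=\; \bigO\!\left( b(t)\,g(t)^d\,\sqrt{t\,\gamma_t(\theta_t)} \;+\; \gamma_t(\theta_t)\sqrt t \right).
\]
This is exactly the shape of the standard \textsc{GP-UCB} bound $\sqrt{t\beta_t\gamma_t}$, with the norm-scaling factor $b(t)g(t)^d$ coming out of $\beta_t^{1/2}$.

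Finally I would substitute \cref{thm:gamma_t_lengthscale_bounds}. For the squared exponential kernel, \cref{eq:gamma_t_gaussian} gives $\gamma_t(\theta_t)=\bigO\!\big(g(t)^d(\log t)^{d+1}\big)$; identifying the $g$-free factor $(\log t)^{d+1}$ with $\gamma_t(\theta_0)$ (equivalently, setting $g\equiv 1$ in \cref{eq:gamma_t_gaussian}), i.e.\ $\gamma_t(\theta_t)=\bigO\!\big(g(t)^d\gamma_t(\theta_0)\big)$, and plugging into the displayed bound and simplifying the powers of $g(t)$ yields the squared-exponential claim. For the Mat\'ern kernel, \cref{eq:gamma_t_matern} analogously gives $\gamma_t(\theta_t)=\bigO\!\big(g(t)^{2\nu+d}\gamma_t(\theta_0)\big)$ with $\gamma_t(\theta_0)=\bigO\!\big(t^{d(d+1)/(2\nu+d(d+1))}\log t\big)$, and the same substitution and simplification yield the Mat\'ern claim.

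I do not expect a genuine obstacle here: the computation is routine, and the only point that deserves care is the factorization $\gamma_t(\theta_t)=\bigO\!\big(g(t)^{c}\gamma_t(\theta_0)\big)$, i.e.\ that the dependence on the lengthscale-scaling $g(t)$ separates cleanly from the dependence on the horizon $t$. This separation is immediate from the product form of the bounds in \cref{thm:gamma_t_lengthscale_bounds}, so all the substance sits in \cref{thm:main} and \cref{thm:gamma_t_lengthscale_bounds}; the corollary is just the arithmetic of combining them.
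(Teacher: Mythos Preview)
Your proposal is correct and follows exactly the paper's own argument: bound the first summand in \cref{eq:thm:regret} by a constant (since $g$ and $b$ grow unbounded their inverses at fixed arguments are finite), replace $I_{\theta_t}(\mb{y}_t;f)$ by $\gamma_t(\theta_t)$, expand $\beta_t$, and substitute the kernel-specific bounds of \cref{thm:gamma_t_lengthscale_bounds}. One remark: carrying your Mat\'ern substitution $\gamma_t(\theta_t)=\bigO(g(t)^{2\nu+d}\gamma_t(\theta_0))$ through the second summand literally yields $g(t)^{2\nu+d}\gamma_t(\theta_0)\sqrt t$ rather than the stated $g(t)^{\nu+d}\gamma_t(\theta_0)\sqrt t$, so the discrepancy lies in the corollary's printed exponent, not in your reasoning.
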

If~$b(t)$ and~$g(t)$ grow unbounded, the first term of the cumulative regret in~\cref{eq:thm:regret} can be upper bounded by a constant. The remaining result is obtained by plugging in~$\beta_t$ and the bounds from~\cref{eq:gamma_t}. Thus, any functions $g(t)$ and $b(t)$ that render the regret bounds in Corollary~\ref{cor:concrete_regret_bounds} sublinear allow the algorithm to converge, even though the true lengthscales and norm bound are unknown.

\added{
The specific choices of~$b(t)$ and~$g(t)$ matter for the regret bound in \cref{thm:main} in practice. Consider the one-dimensional case~$d=1$ for the Gaussian kernel. Given the true hyperparameters~$B$ and~$\theta$, if we set~$g(t)= \theta_0 / \theta$ and~$b(t) = B / B_0$ to be constant, we recover the non-adaptive regret bounds of~\textsc{GP-UCB} with known hyperparameters. If~$g(t)$ depends on~$t$ and grows slowly, then the algorithm incurs constant regret during the initial rounds when the model is misspecified, while functions~$g$ that grow to values larger than the optimal ones lead to additional exploration and incur an additional~$\bigO(b(t)g(t)^{3d/2)})$ factor in the cumulative regret in later rounds, as in Corollary~\ref{cor:concrete_regret_bounds}. In the following section, we discuss appropriate choices for these functions in practice.
}


\subsection{Choosing the scaling functions~$g(t)$ and~$b(t)$}
\label{sec:practical}

It follows from~\cref{thm:main} that \textsc{A-GP-UCB} achieves no-regret for any functions~$g(t)$ and~$b(t)$ that increase without bound and render~\cref{eq:thm:regret} sublinear in~$t$. Thus, the corresponding BO routine converges to the optimal value eventually. For example,~$b(t)=g(t)=\log(t)$ satisfy this condition. However, the convergence guarantees in~\cref{thm:main} are only meaningful once $t$ has grown sufficiently so that the true function is contained in the confidence intervals. In practice, BO is often used with objective functions~$f$ that are expensive to evaluate, which imposes a hard constraint on the number of evaluations. For the regret bounds to be meaningful in this setting, we must choose functions~$g$ and~$b$ that grow fast enough to ensure that the constant regret period in~\cref{eq:thm:regret} is small, yet slow enough that the effect of the sublinear regret is visible for small enough~$t$. In the following, we propose two methods to choose~$g(t)$ and~$b(t)$  \emph{adaptively}, based on the observations seen so far.

For convenience, we fix the relative magnitude of~$g(t)$ and~$b(t)$. In particular, we define $b(t) = 1 + \epsilon_b(t)$ and $g(t)^d = 1 + \epsilon_g(t)$ together with a weighting factor~$\lambda = \epsilon_b(t) / \epsilon_g(t)$ that encodes whether we prefer to scale up the norm bound using~$b(t)$ or decrease the lengthscales using~$g(t)$. This allows us to reason about the overall magnitude of the scaling~$h(t) = (1 + \epsilon_g(t))(1 + \epsilon_b(t)) \geq 1$, which can be uniquely decomposed into~$g(t)$ and~$b(t)$ given~$\lambda$. For~$\lambda=0$ we have~$g(t) = h(t)$, $b(t)=1$ and the algorithm prefers to attribute an increase in~$h(t)$ to~$g(t)$ and shorten the lengthscales, while for~$\lambda \to \infty$ the algorithm prefers to scale up the RKHS norm. The assumptions in Corollary~\ref{cor:concrete_regret_bounds} hold for any~$\lambda \in (0, \infty)$ if~$h(t)$ grows unbounded. Moreover, we have that~$g(t)^d \leq h(t)$ and~$b(t) \leq h(t)$.

\paragraph{Reference regret}

While any function~$h(t)$ that grows unbounded and renders the cumulative regret in~\cref{thm:main} sublinear makes our method to converge to the optimum eventually, we want to ensure that our method performs well in finite time too. For fixed hyperparameters with~$h(t)=1$, which implies $g(t)=b(t)=1$, our algorithm reduces to~\textsc{GP-UCB} with hyperparameters~$\theta_0$ and $B_0$ and the regret bound term~$\sqrt{C_1 \beta_t \gamma_t(\theta_0)}$ is sublinear, which is illustrated by the bottom curve in~\cref{fig:cumulative_regret}. However, this does not imply no-regret if hyperparameters are misspecified as in~\cref{fig:bo_example_1}, since the first term in~\cref{thm:main} is unbounded in this case. To avoid this, we must increase the scaling factor~$h(t)$ to consider larger function classes.

We propose to define a sublinear reference regret~$p(t)$, see~\cref{fig:cumulative_regret}, and to scale~$h(t)$ to match an estimate of the regret with respect to the current hyperparameters to this reference. As \textsc{GP-UCB} converges, the regret estimate with respect to the current hyperparameters levels off and drops below the reference~$p(t)$. In these cases, we increase~$h(t)$ to consider larger function classes and explore further. The choice of~$p(t)$ thus directly specifies the amount of additional regret one is willing to incur for exploration. Specifically, given a regret estimate~$\bar{R}_t(h)$ that depends on the data collected so far and the selected scaling~$h$, we obtain~$h(t)$ from matching the reference, $\bar{R}_t(h) = p(t)$, as
\begin{equation}
  h^*(t) = \bar{R}_t^{-1}(p(t)), \qquad h(t) = \max (h^*(t), \, h(t-1)) .
  \label{eq:h_opt_prob}
\end{equation}
Here we explicitly enforce that~$h(t)$ must be an increasing function. In the following, we consider estimators~$\bar{R}_{t}$ that are increasing functions of~$h$, so that~\cref{eq:h_opt_prob} can be solved efficiently via a line search.

Whether choosing~$h(t)$ according to~\cref{eq:h_opt_prob} leads to a sublinear function depends on the regret estimator~$\bar{R}_t$. However, it is always possible to upper bound the~$h(t)$ obtained from~\cref{eq:h_opt_prob} by a fixed sublinear function. This guarantees sublinear regret eventually. In the following, we consider two estimators that upper bound the cumulative regret experienced so far with respect to the hyperparameters suggested by~$h(t)$.

\paragraph{Regret bound}
As a first estimator for the cumulative regret, we consider the regret bound on~$R_t$ in~\cref{eq:thm:regret}. We focus on the Gaussian kernel, but the arguments transfer directly to the case of the Mat\'ern kernel. The term~$\sqrt{C_1 t \,\beta_t\, I_{\theta_{t}}(\mb{y}_t; f) }$ bounds the regret with respect to the current function class specified by~$\theta_t$. In addition to the direct dependence on~$b(t)g(t)^d$ in~$\beta_t$, the regret bound also depends on~$g(t)$ implicitly through the mutual information~$I_{\theta_t}(\mb{y}_t; f)$, where $\theta_t = \theta_0 / g(t)$. To make the dependence on~$g(t)$ more explicit, we use~\cref{thm:gamma_t_lengthscale_bounds} and rewrite the mutual information as~$(g(t) / g(t-1))^d I_{\theta_{t-1}}(\mb{y}_t; f)$ instead. Note that the scaling factor was derived for~$\gamma_t$, but remains a good indicator of increase in mutual information in practice. With this replacement we use
\begin{equation}
\bar{R}_t(h) = \sqrt{C_1 t \,\beta_t\left(b(t), g(t)\right)\,  g(t)^d I_{\theta_{t-1}}(\mb{y}_t; f) }
\label{eq:gt_scaleinfo}
\end{equation}
to estimate the regret, where the term~$\beta_t(b, g)$ is as in~\cref{thm:main}, but with the mutual information similarly replaced with the explicit dependence on~$g(t)$. Solving~\cref{eq:h_opt_prob} together with~\cref{eq:gt_scaleinfo} is computationally efficient, since computing~$\bar{R}_t$ does not require inverting the kernel matrix.

\added{
\paragraph{One step predictions}

While~\cref{eq:gt_scaleinfo} is fast to compute, it requires us to know the dependence of~$\gamma_t(\theta_t)$ on~$h(t)$. Deriving analytic bounds can be infeasible for many kernels.
As an alternative, we estimate the regret one-step ahead directly.
In particular, if the considered function class is sufficiently large and our confidence intervals hold at all time steps~$t > 0$, then the one-step ahead cumulative regret~$R_{t+1}$ for our algorithm at iteration~$t$ is bounded from above by
\begin{equation}
  \bar{R}_{t} = 2 \sum_{j=1}^{t} \beta_j^{1/2} \sigma_j(\mb{x}_{j + 1}),
  \label{eq:hyp_opt}
\end{equation}
where each~$\beta_t$ and~$\sigma_t$ is based on the corresponding hyperparameters~$\theta_t$. In \cref{thm:main}, $R_{t+1}$ is further upper-bounded by~\cref{eq:thm:regret}. The regret estimate in~\cref{eq:hyp_opt} depends on~$\mb{x}_{t+1}$, which is the next input that would be evaluated based on the UCB criterion with GP hyperparameters scaled according to~$h(t)$. As the hyperparameters for previous iterations are fixed, the only term that depends on~$h(t)$ is the bound on the instantaneous regret,~$r_t \leq 2 \beta_t \sigma_t(\mb{x}_{t+1})$.
Unlike~\cref{eq:gt_scaleinfo}, \cref{eq:hyp_opt} is not able to exploit the known dependence of~$\gamma_t$ on~$h(t)$, so that it cannot reason about the long-term effects of changing~$h(t)$. This means that, empirically, the cumulative regret may overshoot the reference regret, only to settle below it later.

Scaling~$h(t)$ according to~\cref{eq:hyp_opt} provides an interesting perspective on the method by~\citet{Wang2014Theoretical}. They decrease the kernel lengthscales whenever~$\sigma_t(\mb{x}_{t+1}) \leq \kappa$. In our framework, this corresponds to~$p(t) = \sum_{j=1}^t 2 \beta_j \max(\kappa, \sigma_j(\mb{x}_{j+1})) \geq \kappa t$, which is not sublinear. As a consequence, while they ultimately bound the cumulative regret using the smallest possible lengthscale, the choice for~$p(t)$ forces too much exploration to achieve sublinear regret before the lower bound is reached. In contrast, if we choose~$p(t)$ to be sublinear, then the function class grows slowly enough to ensure more careful exploration. This allows us to achieve sublinear regret in the case when a lower bound on the hyperparameters it not known.
}

\subsection{Practical Considerations and Discussion}
\label{sec:exploration}
\label{sec:discussion}

In this section, we discuss additional practical considerations and show how to combine the theoretical results with online inference of the hyperparameters.

\paragraph{Online inference and exploration strategies}
The theoretical results presented in the previous sections extend to the case where the initial guess~$\theta_0$ of the GP's lengthscale is improved online using any estimator, e.g., with MAP estimation to obtain~$\theta_t^\mathrm{MAP}$. Theoretically, as long as the change in~$\theta_0$ is bounded, the cumulative regret increases by at most a constant factor. In practice, this bound can always be enforced by truncating the estimated hyperparameters. Moreover, the scaling induced by online inference can be considered to be part of~$g(t)$ according to~\cref{thm:rkhs_norm_change}, in which case the norm bound can be adapted accordingly. In practice, online inference improves performance drastically, as it is often difficult to specify an appropriate relative initial scaling of the lengthscales~$\theta_0$.

In more than one dimension,~$d>1$, there are multiple ways that MAP estimation can be combined with the theoretical results of the paper. The simplest one is to enforce an upper bound on the lengthscales based on~$g(t)$,
\begin{equation}
  \theta_t = \min(\theta_t^{\mathrm{MAP}},\, \theta_0 \,/\, g(t)),
  \label{eq:exploration_max}
\end{equation}
\added{where the min is taken elementwise}. This choice is similar to the one by~\citet{Wang2016Bayesian}. If all entries of~$\theta_0$ have the same magnitude, this scaling can be understood as encouraging additional exploration in the smoothest direction of the input space first. This often makes sense, since MAP estimates tend to assume functions that are too smooth, see~\cref{fig:mcmc_example}. However, it can be undesirable in the case when the true function only depends on a subset of the inputs. In these cases, the MAP estimate would correctly eliminate these inputs from the input space by assigning long lengthscales, but the scaling in~\cref{eq:exploration_max} would encourage additional exploration in these directions first. \added{Note that eventually exploring the entire input space is unavoidable to avoid getting stuck in local optima~\citep{Bull2011Convergence}.}

An alternative approach is to instead scale down the MAP estimate directly,
\begin{equation}
  \theta_t = \theta_t^{\mathrm{MAP}}  \, / \, \max(g(t), \, 1).
  \label{eq:exploration_scale}
\end{equation}
This scaling can be understood as evenly encouraging additional exploration in all directions.
While~\cref{eq:exploration_scale} also explores in directions that have been eliminated by the MAP estimate, unlike~\cref{eq:exploration_max} it simultaneously explores all directions relative to the MAP estimate. From a theoretical point of view, the choice of exploration strategy does not matter, as in the limit as~$t \to \infty$ all lengthscales approach zero. In the one-dimensional case, the two strategies are equivalent. Both strategies use the MAP lengthscales for BO in the nominal case, but the~$g(t)$ factor eventually scales down the lengthscales further. This ensures that our method only improves on the empirical performance of BO with MAP estimation.

In practice, maximum likelihood estimates for the inputs are often good enough when the underlying function resembles a sample from a GP. Thus, the approach presented in this paper is most relevant when the underlying function has some `nonstationarity'. In the literature, other approaches to deal with nonstationarity have been proposed. For example,~\cite{Snoek2013Input} scale the input inputs through a beta function and infer its hyperparameters online. Our approach can easily be combined with any such method, as it works on top of any estimate provided by the underlying inference scheme. Moreover, in high-dimensional spaces one can combine our algorithm with methods to automatically identify a low-dimensional subspace of~$\mathcal{D}$~\citep{Djolonga2013HighDimensional,Wang2016Bayesian}.

In this paper, we have considered the kernel to be fixed, and only adapted the lengthscales and norm bound. However, often the kernel structure itself is a critical hyperparameter~\citep{Duvenaud2011Additive}. The strategy presented in this paper could be used to add rougher kernels over time or, for example, to adapt the~$\nu$ input of the Mat\'ern kernel, which determines its roughness.

\paragraph{Confidence intervals}

Empirically, $\beta_t$ is often set to a constant rather than using the theoretical bounds in~\cref{thm:confidence_interval}, which leads to (point-wise) confidence intervals when~$f$ is sampled from a GP model. In particular, typically measurement data is standardized to be zero mean and unit variance and~$\beta_t$ is set to two or three. This often works well in practice, but does not provide any guarantees. However, if one were to believe the resulting confidence bounds, our method can be used to avoid getting stuck in local optima, too. In this case on can set~$h(t) = g(t)$ and apply our method as before.

\paragraph{General discussion}
Knowing how the sample complexity of the underlying BO algorithm depends on the lengthscales also has implications in practice. For example,~\cite{Wang2016Bayesian} and~\cite{Wabersich2016Advancing} suggest to scale down the lengthscales by a factor of~$2$ and roughly~$1.1$, respectively, although not at every iteration. As shown in~\cref{sec:theory}, this scales the regret bound by a factor of~$g^d$, which quickly grows with the number of dimensions. Exponentiating their factors with~$1/d$ is likely to make their approaches more robust when BO is used in high-dimensional input spaces~$\mathcal{D}$.

Lastly, in a comparison of multiple BO algorithms (acquisition functions) on a robotic platform,~\cite{Calandra2014Experimental} conclude that the~\textsc{GP-UCB} algorithm shows the best empirical performance for their problem. They use the theoretical version of the algorithm by~\citet{Srinivas2012Gaussian}, in which~$\beta_t$ grows with an additional factor of~$\bigO(\sqrt{\log(t^2)})$ relative to~\cref{thm:confidence_interval}. In our framework with the bounds in~\cref{thm:confidence_interval}, this is equivalent to scaling up the initial guess for the RKHS norm bound for~$f$ by the same factor at every iteration, which increases the function class considered by the algorithm over time. We conjecture that this increase of the function class over time is probably responsible for pushing the MAP estimate of the lengthscales out of the local minima, which in turn led to better empirical performance.


\section{Experiments}
\label{sec:experiments}

In this section, we evaluate our proposed method on several benchmark problems. \added{As  baselines, we consider algorithms based on the \textsc{UCB} acquisition function.} We specify a strong gamma prior that encourages short lengthscales, and consider both maximum a posteriori (MAP) point-estimates of the hyperparameters and a Hamiltonian Monte Carlo (HMC) approach that samples from the posterior distribution of the hyperparameters and marginalizes them out. Unless otherwise specified, the initial lengthscales are set to~$\theta_0 = \mb{1}$, the initial norm bound is~$B_0=2$, the confidence bounds hold with probability at least~$\delta=0.9$, and the tradeoff factor between~$b(t)$ and~$g(t)$ is~$\lambda = 0.1$.

We follow several best-practices in BO to ensure a fair comparison with the baselines. We rescale the input space~$\mathcal{D}$ to the unit hypercube in order to ensure that both the initial lengthscales and the prior over lengthscales are reasonable for different problems. As is common in practice, the comparison baselines use the empirical confidence intervals suggested in~\cref{sec:discussion}, instead of the theoretical bounds in~\cref{thm:confidence_interval} that are used for our method. Lastly, we initialize all GPs with~$2^d$ measurements that are collected uniformly at random within the domain~$\mathcal{D}$.
To measure performance, we use the cumulative regret that has been the main focus of this paper. In addition, we evaluate the different methods in terms of simple regret, which is the regret of the best inputs evaluated so far, $\max_{x\in\mathcal{D}} f(x) - \max_{t' <= t} f(\mb{x}_{t'})$. This metric is relevant when costs during experiments do not matter and BO is only used to determine high-quality inputs by the end of the optimization procedure.

\subsection{Synthetic Experiments}

\paragraph{Example function}

\begin{figure*}[t]
\centering
\subcaptionbox{Simple regret. \label{fig:bumplinear_simregret}}
    {\includegraphics{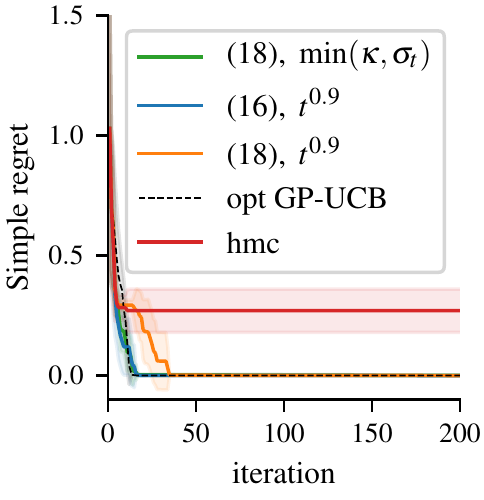}}
\subcaptionbox{Cumulative regret. \label{fig:bumplinear_cumregret}}
    {\includegraphics{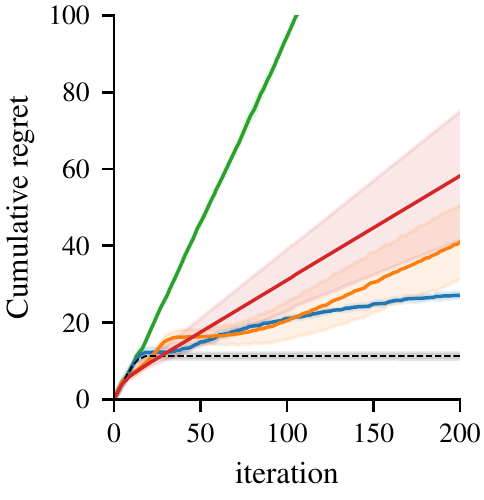}}
\subcaptionbox{Scaling~$g(t)$. \label{fig:bumplinear_funclass}}
    {\includegraphics{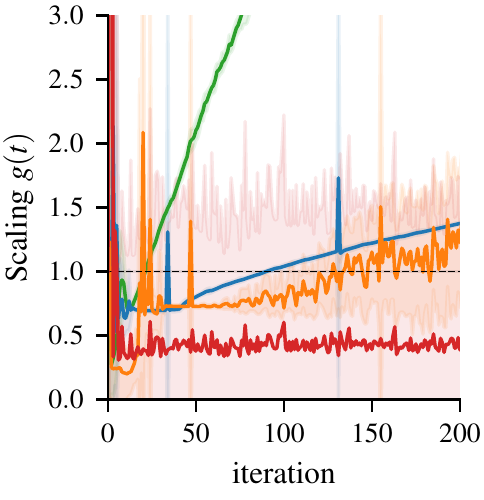}}
\caption{Mean and standard deviation of the empirical simple and cumulative regret over ten different random initializations for the function in~\cref{fig:bo_example}. The HMC baseline (red) gets stuck in a local optimum and obtains constant regret in~\cref{fig:bumplinear_simregret}. \textsc{GP-UCB} with the true hyperparameters (gray dashed) obtains the lowest cumulative regret in~\cref{fig:bumplinear_cumregret}. However, our methods (orange/blue) increase the function class over time, see~\cref{fig:bumplinear_funclass}, and thus obtain sublinear regret without knowing the true hyperparameters.}
\label{fig:bumplinear_results}
\end{figure*}

We first evaluate all proposed methods on the example function in~\cref{fig:bo_example}, which lives inside the~RKHS associated with a Gaussian kernel with~$\theta=0.1$ and has norm~$\|f\|_{k_\theta}=2$. We evaluate our proposed method for the sublinear reference function~$p(t) = t^{0.9}$ together with maximum a posteriori hyperparameter estimation. We compare against both \textsc{GP-UCB} with the fixed, correct hyperparameters and HMC hyperparameter estimation. Additionally, we consider a modified variant of the method suggested by~\cite{Wang2014Theoretical}, see~\cref{sec:practical}. Rather than scaling the lengthscales by a fixed constant, we conduct a line search to find the smallest possible scaling factor that renders~$\sigma_t(\mb{x}_{t+1}) \geq \kappa = 0.1$. This is the most conservative variant of the algorithm. Note that we do not know a lower bound on the hyperparameters and therefore do not enforce it.

The results of the experiments are shown in~\cref{fig:bumplinear_results}. The simple regret plot in~\cref{fig:bumplinear_simregret} shows that all methods based on hyperparameter adaptation evaluate close-to-optimal inputs eventually, and do so almost as quickly as~\textsc{GP-UCB} based on the true hyperparameters (black, dashed). However, the method based on HMC hyperparameter estimation (red) considers functions that are too smooth and gets stuck in local optima, as in~\cref{fig:bo_example}. This can also be seen in~\cref{fig:bumplinear_funclass}, which plots the effective scaling~$g(t)$ based on the combination of Bayesian hyperparameter estimation and hyperparameter adaptation through~$h(t)$. The HMC hyperparameters  consistenly over-estimate the lengthscales by a factor of roughly two. In contrast, while the MAP estimation leads to the wrong hyperparameters initially, the adaptation methods in~\cref{eq:gt_scaleinfo,eq:hyp_opt} slowly increase the function class until the true lengthscales are found eventually. It can be seen that the one step estimate~\cref{eq:hyp_opt} (orange) is more noisy than the upper bound in~\cref{eq:gt_scaleinfo} (blue).

 \begin{figure*}[t]
 \centering
 \subcaptionbox{Simple regret. \label{fig:sample_simregret}}
     {\includegraphics{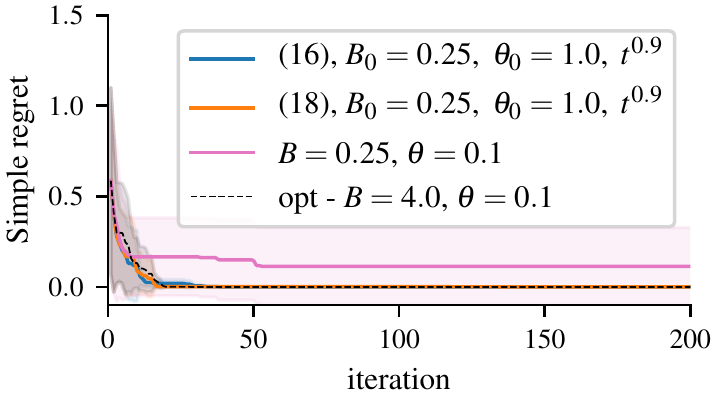}}
 \subcaptionbox{Cumulative regret. \label{fig:sample_cumregret}}
     {\includegraphics{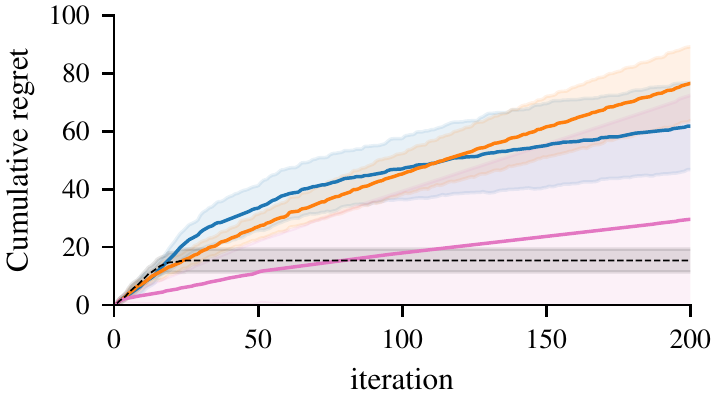}}
 \caption{Simple and cumulative regret over 10 random seeds for samples from a GP with bounded RKHS norm. The \textsc{GP-UCB} algorithm with misspecified hyperparameters (magenta) fails to converge given only a wrong choice of~$B_0$. In contrast, our methods (blue/orange) converge even though~$\theta_0$ is misspecified in addition.}
 \label{fig:sample_results}
 \end{figure*}

While all adaptation methods determine good inputs quickly according to the simple regret, they perform differently in terms of the cumulative regret in~\cref{fig:bumplinear_cumregret}. As expected, the HMC method (red line) converges to a local optimum and experiences constant regret increase equal to the simple regret at every time step. The modified method of~\cite{Wang2014Theoretical} (green line) expands the function class too aggressively and also achieves constant regret. Empirically, their method always explores and never repeatedly evaluates close-to-optimal inputs that would decrease cumulative regret. While the method works well in terms of simple regret, without a lower bound on the hyperparameters it never converges to sublinear regret. As expected from~\cref{thm:main}, GP-UCB based on the optimal hyperparameters achieves the lowest cumulative regret. Our two methods expand the function class over time, which allows them to converge to close-to-optimal inputs, even though MAP estimation estimates the hyperparameters wrongly initially. While the regret is sublinear, the additional exploration caused by~$g(t)$ means that the cumulative regret is larger. This is the additional cost we incur for not knowing the hyperparameters in advance.

\paragraph{Samples from a GP}

As a second experiment, we compare~\textsc{GP-UCB} to \textsc{A-GP-UCB} on samples drawn from a GP when the norm bound~$B_0$ is misspecified. Samples from a GP are not contained in the RKHS. To avoid this technical issue, we sample function values from the posterior GP at only a finite number of discrete gridpoints and interpolate between them using the kernel with the correct lengthscales~$\theta$. We rescale these functions to have~RKHS norm of~$B=4$, but use~$B_0=0.25$ as an initial guess for both BO algorithms and do not use any hyperparameter estimation. Even though we use the correct kernel lengthscales for \textsc{GP-UCB},~$\theta_0 = \theta = 0.1$, this discrepancy means that the true function is not contained in the initial confidence intervals. As before, for our method we use the reference regret~$p(t) = t^{0.9}$ and additionally misspecify the lengthscales,~$\theta_0 = 1$.

The results are shown in~\cref{fig:sample_results}. \textsc{GP-UCB} with the correct hyperparameters (black, dashed) obtains the lowest cumulative regret. However, it fails to converge when hyperparameters are misspecified (magenta), since the confidence intervals are too small to encourage any exploration. In contrast, our methods (blue/orange) converge to close-to-optimal inputs as in the previous example.

\subsection{Logistic Regression Experiment}

\begin{figure*}[t]
    \centering
    \subcaptionbox{Simple regret. \label{fig:logistic_simregret}}
        {\includegraphics{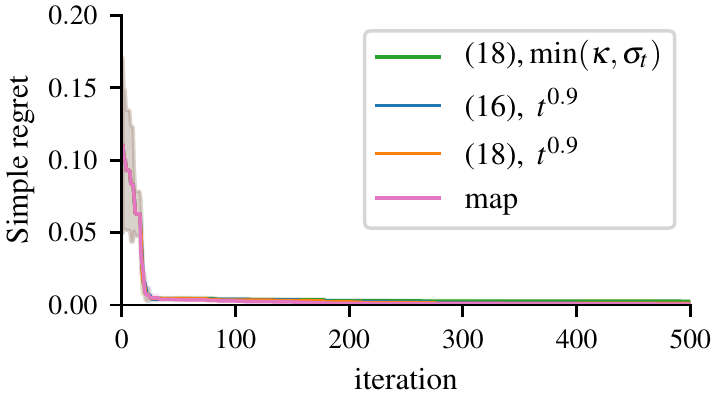}}
    \subcaptionbox{Cumulative regret. \label{fig:logistic_cumregret}}
        {\includegraphics{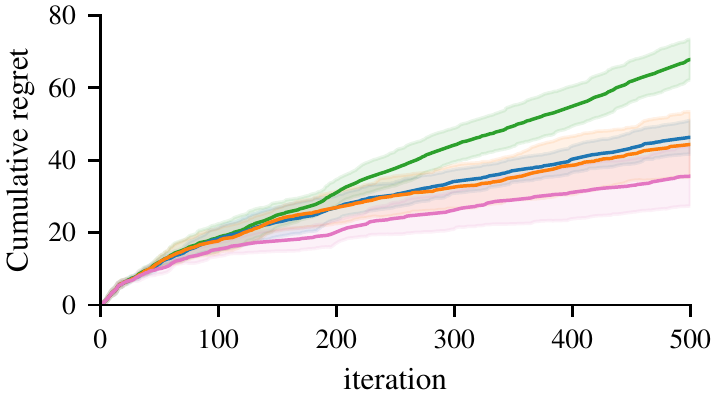}}
    \caption{Simple and cumulative regret over 5 random seeds for a logistic regression problem. All methods determine close-to-optimal parameters. However, our methods explore more to counteract misspecified hyperparameters.}
    \label{fig:logistic_results}
\end{figure*}

Lastly, we use our method to tune a logistic regression problem on the MNIST data set~\citep{Lecun1998MNIST}. As in the experiment in~\cite{Klein2016Bayesian}, we consider four training inputs: the learning rate, the~$l_2$ regularization constant, the batch size, and the dropout rate. We use the validation loss as the optimization objective.

The results are shown in~\cref{fig:logistic_results}. Even though the input space is fairly high-dimensional with~$d=4$, all algorithms determine close-to-optimal inputs quickly. In particular, MAP estimation determines that both the dropout rate and the batch size do not influence the validation loss significantly. Since the theoretical results in~\textsc{A-GP-UCB} are compatible with MAP estimation, our approach achieves the same empirical performance, but has theoretical worst-case regret bounds. After convergence, the BO baselines repeatedly evaluate the same inputs, without gaining any new information. In contrast, our method continues to explore in order to potentially find better inputs. While it does not occur in this case, this allows us to be more confident that the global optimum has been identified as~$t$ increases. For standard BO methods, there is no guarantee of convergence with misspecified hyperparameters.


\section{Conclusion and Future Work}
\label{sec:conclusion}

We introduced~\algabbrv, a BO algorithm that is provably no-regret when hyperparameters are unknown. Our method adapts the hyperparameters online, which causes the underlying BO algorithm to consider larger function spaces over time. Eventually, the function space is large enough to contain the true function, so that our algorithm provably converges. We evaluated our method on several benchmark problems, confirming that, on the one hand, it provably converges even in cases where standard BO algorithms get stuck in local optima, and, on the other hand, enjoys competitive performance as standard BO algorithms that do not have theoretical guarantees in this setting.

The main idea behind our analysis is that adapting the hyperparameters increases the cumulative regret bound, but we do so slowly enough to converge eventually. This idea is fairly general and could also be applied to other no-regret algorithms. Another potential future direction is to investigate alternative strategies to select the scaling factors~$b(t)$ and~$g(t)$ and consider adapting other parameters such as the kernel structure.


\acks{This research was supported in part by SNSF grant {200020\_159557}, ERC grant no. {815943}, NSERC grant {RGPIN-2014-04634}, the Vector Institute, and an Open Philantropy Project AI fellowship. We would like to thank Johannes Kirschner for valueable discussions.}



\bibliography{root.bib}

\appendix

\section{Proof of Main Theorem}

\begin{lemma}
  Let~$f \in \mathcal{H}_{\theta_{t^*}}$ with~$\|f\|_{\theta_{t^*}} \leq B_{t^*}$. Then, for any monotonically increasing functions~$g(t) \geq 1$ and~$b(t) \geq 1$ and for all~$t \geq t^*$: $f \in \mathcal{H}_{\theta_{t}}$ with~$\|f\|_{\theta_{t}} \leq B_{t}$
  \label{lem:f_contained_once_found}
\end{lemma}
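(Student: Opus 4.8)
The plan is to obtain this as a bookkeeping consequence of \cref{thm:rkhs_norm_change} (Bull's Lemma~4) together with the monotonicity of $g$ and $b$. First I would observe that, since $g$ is monotonically increasing and $t \geq t^*$, we have $g(t) \geq g(t^*) > 0$, so componentwise $\theta_t = \theta_0/g(t) \leq \theta_0/g(t^*) = \theta_{t^*}$. Hence \cref{thm:rkhs_norm_change} applies with $\theta = \theta_{t^*}$ and $\theta' = \theta_t$, yielding simultaneously $f \in \mathcal{H}_{\theta_t}$ and the norm inequality $\|f\|_{\mathcal{H}_{\theta_t}}^2 \leq \big(\prod_{i=1}^d [\theta_{t^*}]_i/[\theta_t]_i\big)\,\|f\|_{\mathcal{H}_{\theta_{t^*}}}^2$. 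Since each ratio equals $[\theta_{t^*}]_i/[\theta_t]_i = g(t)/g(t^*)$, the product collapses to $(g(t)/g(t^*))^d$, which settles the containment part and gives a clean norm bound to work from.

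It then remains only to verify $\|f\|_{\theta_t} \leq B_t$. Using $\|f\|_{\theta_{t^*}} \leq B_{t^*}$ and the definition $B_{t^*} = b(t^*)\, g(t^*)^d\, B_0$ from \cref{eq:temporal_lengthscales_and_norm}, the previous step gives
\[
  \|f\|_{\mathcal{H}_{\theta_t}}^2 \;\leq\; \left(\frac{g(t)}{g(t^*)}\right)^{d} B_{t^*}^2 \;=\; g(t)^d\, g(t^*)^d\, b(t^*)^2\, B_0^2 .
\]
Comparing this with $B_t^2 = b(t)^2\, g(t)^{2d}\, B_0^2$, the desired inequality $\|f\|_{\mathcal{H}_{\theta_t}}^2 \leq B_t^2$ reduces to $b(t^*)^2\, g(t^*)^d \leq b(t)^2\, g(t)^d$. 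This holds because $t \geq t^*$ and both $b$ and $g$ are monotonically increasing and positive, so $b(t) \geq b(t^*)$ and $g(t)^d \geq g(t^*)^d$; multiplying the two inequalities gives the claim, and combining it with the containment statement proves the lemma.

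I do not anticipate a genuine obstacle here: the argument is purely an exponent-accounting exercise. The one point that requires care — and the reason the factor $g(t)^d$ is built into $B_t$ in \cref{eq:temporal_lengthscales_and_norm} in the first place — is that shrinking the lengthscales by $g(t)$ can inflate the RKHS norm of a fixed function by up to $g(t)^d$ via \cref{thm:rkhs_norm_change}. The definition of $B_t$ is chosen precisely so that this worst-case inflation, together with the monotone growth of $b$, is always absorbed, leaving the inequality $b(t^*)^2 g(t^*)^d \leq b(t)^2 g(t)^d$ as the only thing to check.
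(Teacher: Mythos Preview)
Your proof is correct and follows essentially the same route as the paper: apply \cref{thm:rkhs_norm_change} with $\theta'=\theta_t\le\theta_{t^*}$ (using monotonicity of $g$) to get containment and a norm bound with ratio $(g(t)/g(t^*))^d$, then unpack $B_{t^*}=b(t^*)g(t^*)^d B_0$ and use monotonicity of $b$ and $g$ to conclude $\|f\|_{\theta_t}\le B_t$. The only cosmetic difference is that you track squared norms (as \cref{thm:rkhs_norm_change} is stated), whereas the paper works with the norm itself; since the ratio is at least one this is harmless either way.
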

\begin{proof}
  \cref{thm:rkhs_norm_change} together with monotonicity of~$g$ yields $\mathcal{H}_{\theta_{t}} \supseteq \mathcal{H}_{\theta_{t^*}}$ so that~$f \in \mathcal{H}_{\theta_{t}}$ and
$$
  \|f\|_{\theta_t}
  \leq \prod_{1 \leq i \leq d} \frac{[\theta_{t^*}]_i}{ [\theta_t]_i} \|f\|_{\theta_{t^*}}
  \leq \frac{g(t)^d} {g(t^*)^{d}} B_{t^*}
  = \frac{g(t)^d} {g(t^*)^{d}} g(t^*)^d b(t^*) B_0
  = g(t)^d b(t^*) B_0
  \leq B_t
$$
\end{proof}

\begin{lemma}
Under the assumptions of~\cref{thm:confidence_interval}, let~$\theta_t$ be a predictable sequence of kernel hyperparameters such that~$\|f\|_{k_{\theta_t}} \leq B_t$ and let the GP predictions~$\mu_t$ and~$\sigma_t$ use the prior covariance~$k_{\theta_t}$. If $\beta_t^{1/2} = B_t + 4 \sigma \sqrt{ I_{\theta_t}(\mb{y}_{t}; f) + 1 + \mathrm{ln}(1 / \delta)}$, then
$|\, f(\mb{x}) - \mu_{t}(\mb{x}) \,| \leq \beta_{t}^{1/2} \sigma_{t}(\mb{x})
$
holds for all~${\mb{x} \in D}$ and iterations~${t \geq 0}$ jointly with probability at least~${1 - \delta}$.
\label{thm:confidence_interval_extended}
\end{lemma}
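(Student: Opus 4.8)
The plan is to replay the self-normalized concentration argument behind \cref{thm:confidence_interval} (Abbasi-Yadkori et al.; Chowdhury \& Gopalan), now carrying the iteration index $t$ together with the kernel $k_{\theta_t}$ that is in force at that step. Fix $t$ and $\mathbf{x}$ and work in the RKHS $\mathcal{H}_{\theta_t}$ with canonical feature map $\phi_{\theta_t}$, which is legitimate because $\|f\|_{k_{\theta_t}} \le B_t < \infty$. Writing $\Phi_t$ for the operator collecting $\phi_{\theta_t}(\mathbf{x}_1),\dots,\phi_{\theta_t}(\mathbf{x}_t)$, using $\mathbf{V}_t = \Phi_t^\ast \Phi_t + \sigma^2 \mathbf{I}$ and $\mathbf{y}_t = \Phi_t f + \bs{\epsilon}_{1:t}$, the ridge/GP identity gives the familiar split
\[
  f(\mathbf{x}) - \mu_t(\mathbf{x}) \;=\; \sigma^2\,\langle \phi_{\theta_t}(\mathbf{x}),\, \mathbf{V}_t^{-1} f\rangle \;-\; \langle \phi_{\theta_t}(\mathbf{x}),\, \mathbf{V}_t^{-1}\Phi_t^\ast \bs{\epsilon}_{1:t}\rangle .
\]
Applying Cauchy--Schwarz in the $\mathbf{V}_t^{-1}$-inner product, and using $\sigma_t^2(\mathbf{x}) = \sigma^2\langle \phi_{\theta_t}(\mathbf{x}), \mathbf{V}_t^{-1}\phi_{\theta_t}(\mathbf{x})\rangle$ together with $\mathbf{V}_t^{-1} \preceq \sigma^{-2}\mathbf{I}$, the first term is at most $\|f\|_{k_{\theta_t}}\,\sigma_t(\mathbf{x}) \le B_t\,\sigma_t(\mathbf{x})$ and the second at most $\sigma^{-1}\|\Phi_t^\ast\bs{\epsilon}_{1:t}\|_{\mathbf{V}_t^{-1}}\,\sigma_t(\mathbf{x})$. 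Both bounds are pointwise in $\mathbf{x}$, so it remains only to control the scalar random variable $\|\Phi_t^\ast\bs{\epsilon}_{1:t}\|_{\mathbf{V}_t^{-1}}$ simultaneously over all $t$.

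For this I would invoke the standard self-normalized tail bound (method of mixtures / Ville's inequality): for a fixed kernel it yields, with probability $\ge 1-\delta$ and for all $t$, that $\|\Phi_t^\ast\bs{\epsilon}_{1:t}\|^2_{\mathbf{V}_t^{-1}}$ is at most a constant multiple of $\sigma^2\big(\tfrac12\log\det(\mathbf{I} + \sigma^{-2}\mathbf{K}_{\theta_t,t}) + 1 + \log(1/\delta)\big)$, and the slightly loose repackaging carried out in those references turns this into $\sigma^{-1}\|\Phi_t^\ast\bs{\epsilon}_{1:t}\|_{\mathbf{V}_t^{-1}} \le 4\sigma\sqrt{I_{\theta_t}(\mathbf{y}_t;f) + 1 + \ln(1/\delta)}$, where I used the identity $\log\det(\mathbf{I} + \sigma^{-2}\mathbf{K}_{\theta_t,t}) = 2\,I_{\theta_t}(\mathbf{y}_t;f)$ from \cref{eq:mutual_information}. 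Adding this to the bias bound $B_t\sigma_t(\mathbf{x})$ produces exactly $|f(\mathbf{x})-\mu_t(\mathbf{x})| \le \beta_t^{1/2}\sigma_t(\mathbf{x})$ with the $\beta_t$ stated in the lemma.

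The one genuine subtlety --- and the step I expect to be the main obstacle --- is that the mixture-supermartingale argument, as usually written, fixes the kernel once and for all: then $\Phi_t^\ast\bs{\epsilon}_{1:t} = \sum_{s\le t}\epsilon_s\,\phi(\mathbf{x}_s)$ is a martingale in $t$ and one mixes a single Gaussian prior over $\mathcal{H}$. Here the feature map changes at every step, and data-dependently, so $\Phi_t^\ast\bs{\epsilon}_{1:t}$ is redefined at each $t$ and there is no single supermartingale to which Ville's inequality applies. I would resolve this by exploiting that $\theta_t$ is predictable ($\mathcal{F}_{t-1}$-measurable): cover the range of attainable lengthscale vectors by a countable, geometrically refined grid, apply the fixed-kernel anytime bound to each grid kernel with a summable share of the failure probability, and union bound; the resulting guarantee then holds simultaneously for every grid kernel and all $t$, in particular for the grid point used at step $t$, while the discretization slack is absorbed into $B_t$ via \cref{thm:rkhs_norm_change} at the cost of a lower-order term dominated by $I_{\theta_t}(\mathbf{y}_t;f)$. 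Everything else --- the error decomposition, the bias bound via $\|f\|_{k_{\theta_t}}\le B_t$, the identification of the log-determinant with mutual information, and the observation that uniformity over $\mathbf{x}$ comes for free from the RKHS argument --- carries over essentially verbatim from the proof of \cref{thm:confidence_interval}.
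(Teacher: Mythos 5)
The fixed-kernel core of your argument---the ridge decomposition of $f(\mb{x})-\mu_t(\mb{x})$ into a bias term bounded by $B_t\sigma_t(\mb{x})$ and a self-normalized noise term, and the identification of the log-determinant with $I_{\theta_t}(\mb{y}_t;f)$ via \cref{eq:mutual_information}---is exactly the route the paper takes: its entire proof of this lemma is the remark that the argument of \citet{Abbasi-Yadkori2012Online,Chowdhury2017Kernelized} goes through unchanged when the kernel is time-dependent (and predictable). You correctly identified that the time-varying, data-dependent feature map is the one delicate point, and the paper does not engage with it at all; so up to that point you are, if anything, more careful than the source.

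However, the covering patch you propose does not, as sketched, deliver the lemma as stated. First, the anytime bound you would union-bound over grid kernels controls posterior quantities computed with a grid kernel $\tilde{\theta}$, whereas the statement (and \cref{alg:a_gp_ucb}) involves $\mu_t$, $\sigma_t$ and $I_{\theta_t}$ computed with the actual $\theta_t$. \cref{thm:rkhs_norm_change} only transfers the RKHS \emph{norm}, and only in the direction of smaller lengthscales; it says nothing about how $\mu_t$, $\sigma_t$ or the log-determinant change under a perturbation of the lengthscales, so the discretization slack cannot simply be ``absorbed into $B_t$''---you would need an explicit perturbation bound for the posterior mean and variance in $\theta$, which is missing. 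Second, the union bound charges grid point $i$ a failure probability $w_i\delta$ with $\sum_i w_i\le 1$, so the bound inherited at step $t$ carries an extra $\ln(1/w_{i(t)})$ that grows with the grid index, hence with $g(t)$; this term is absent from the stated $\beta_t^{1/2} = B_t + 4\sigma\sqrt{I_{\theta_t}(\mb{y}_t;f)+1+\ln(1/\delta)}$, and your claim that it is dominated by $I_{\theta_t}(\mb{y}_t;f)$ is not justified, since the \emph{realized} mutual information at time $t$ depends on where the algorithm sampled and need not grow like $d\log g(t)$. So your route at best yields the lemma for a grid-restricted algorithm with an inflated $\beta_t$, not the statement as written; closing it requires either the missing perturbation argument or a direct extension of the self-normalized martingale construction to a predictable kernel sequence---the latter being what the paper implicitly asserts without proof.
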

\begin{proof}
 The proof is the same as the one by \citet{Abbasi-Yadkori2012Online,Chowdhury2017Kernelized}, except that the kernel is time-dependent.
\end{proof}
We are now ready to prove the main result:
\newline

\begin{proof}[\cref{thm:main}]
  We split the regret bound into two terms,~$R_t = t_0\, r_{c} + r_s(t)$.
  In the initial rounds, where either~$B_t \leq g(t)^d B_0$ or~$\max_i [\theta]_i / [\theta]_0 > 1$, the regret is trivially bounded by~$r_t \leq 2\|f\|_\infty \leq 2 \|f\|_{\theta} \leq B$. Thus~$r_c \leq 2 B$. Let~$t_0 \in (0, \infty]$ be the first iteration such that~$f \in \mathcal{H}_{\theta_{t_0}}$ with~$\|f\|_{\theta_{t_0}} \leq B_{t_0}$. From~\cref{lem:f_contained_once_found}, we have that $f \in \mathcal{H}_{\theta_{t}}$ with~$\|f\|_{\theta_{t}} \leq B_t$ for all~$t\geq t_0$. Thus we can use~\cref{thm:confidence_interval_extended} to conclude~$|f - \mu_t(\mb{x})| \leq \beta_t^{1/2} \sigma_t(\mb{x})$ for all~$\mb{x} \in \mathcal{D}$ and~$t\geq t_0$ jointly with probability at least~$(1-\delta)$. We use Lemmas 5.2-5.4 in \citet{Srinivas2012Gaussian} to conclude that the second stage has a regret bound of~$r_s^2(t) \leq C_1 \beta_t I(\mb{y}_t; f) $, which concludes the proof.
\end{proof}

\section{Bound on the information capacity~$\gamma_t$}

\begin{theorem}[Theorem 8 in \cite{Srinivas2012Gaussian}]
Suppose that ${D \subset \mathbb{R}^d}$ is compact, and~${k(\mb{x}, \mb{x}')}$ is a covariance function for which the additional assumption of Theorem 2 in~\cite{Srinivas2012Gaussian} hold. Moreover, let~${ B_k(T_*) = \sum_{s > T_*} \lambda_s }$, where~${\{\lambda_s\}}$ is the operator spectrum of~$k$ with respect to the uniform distribution over~$D$. Pick~$\tau > 0$, and let~${n_T = C_4 T^\tau (\log T) }$ with~${C_4 = 2 \mathcal{V}(D) (2 \tau + 1)}$. Then, the following bound holds true:
%
\begin{equation}
\gamma_T \leq \frac{1/2}{1 - e^{-1}} \, \max_{r \in \{1,\dots,T\}}
T_* \log\left(\frac{r n_T}{\sigma^2}\right)
+ C_4 \sigma^{-2} (1 - \frac{r}{T}) ( B_k(T_*) T^{\tau + 1} + 1) \log T
+ \bigO(T^{1 - \frac{\tau}{d}}).
\label{eq:gamma_t_spectrum_bound}
\end{equation}
\label{thm:bound_gamma_with_spectrum}
\end{theorem}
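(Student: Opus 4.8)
This statement is Theorem~8 of \citet{Srinivas2012Gaussian}, so the plan is to reproduce their argument; this bound is then combined with the known decay rates of the Mercer spectra of the Gaussian and Mat\'ern kernels to establish Proposition~\ref{thm:gamma_t_lengthscale_bounds}. I sketch the skeleton of the proof. First I would reduce the continuous problem to a finite one: using the kernel-smoothness assumption inherited from Theorem~2 of \citet{Srinivas2012Gaussian}, any set of at most $T$ points in the compact domain $D$ can be projected onto the nearest points of a uniform grid $\tilde D_T$ of size $n_T = C_4 T^\tau \log T$ while changing the information gain by at most $\bigO(T^{1-\tau/d})$ --- this yields the additive term --- so that it suffices to bound $\gamma_T^{\tilde D} := \max_{A \subseteq \tilde D_T,\, |A| \le T} I(\mb y_A; f)$. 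Next I would invoke submodularity: the set function $A \mapsto I(\mb y_A; f) = \tfrac12 \log\det(\I + \sigma^{-2}\mb K_A)$ is monotone and submodular, so the greedy rule $\mb x_t = \argmax_{\mb x} \sigma_{t-1}^2(\mb x)$ returns a set $A^{\mathrm g}_T$ with $I(\mb y_{A^{\mathrm g}_T}; f) \ge (1 - e^{-1})\,\gamma_T^{\tilde D}$, hence $\gamma_T^{\tilde D} \le (1 - e^{-1})^{-1} I(\mb y_{A^{\mathrm g}_T}; f)$; this is the origin of the prefactor $\tfrac{1/2}{1-e^{-1}}$.

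It then remains to bound the greedy information gain $I(\mb y_{A^{\mathrm g}_T}; f) = \tfrac12 \sum_{t \le T} \log\bigl(1 + \sigma^{-2}\sigma_{t-1}^2(\mb x_t)\bigr)$ in terms of the operator spectrum $\{\lambda_s\}$. The plan is to relate the eigenvalues of the finite Gram matrix on the $n_T$ grid points to the Mercer eigenvalues $\lambda_s$ of $k$ via a Nystr\"om-type interlacing estimate --- which is where the factors $n_T$, $T^{\tau+1}$ and $\log T$ enter --- and then to split the spectrum at an index $T_*$: the ``head'' of at most $T_*$ dominant eigenvalues contributes at most $T_* \log(r n_T / \sigma^2)$ through a crude per-term bound combined with a counting argument over the grid points (here $r$ is a magnitude level), while the ``tail'' is controlled by $B_k(T_*) = \sum_{s > T_*}\lambda_s$. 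Choosing the free parameters ($r$, and when the bound is applied also the cutoff $T_*$) optimally gives the displayed inequality.

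The hard part is this last spectral step: controlling the eigenvalues of the \emph{data-dependent} Gram matrix $\mb K_{A^{\mathrm g}_T}$, whose inputs were chosen adaptively, by those of the \emph{fixed} integral operator, and then balancing the head count $T_*$ against the tail mass $B_k(T_*)$ --- for a geometrically decaying spectrum (Gaussian kernel) this balance produces a $(\log t)^{d+1}$ factor, and for a polynomially decaying one (Mat\'ern kernel) a polynomial-in-$t$ factor, which is exactly what feeds into Proposition~\ref{thm:gamma_t_lengthscale_bounds}. The discretization step is also somewhat delicate: the exponent $\tau$ must be large enough that $\bigO(T^{1-\tau/d})$ is sublinear yet small enough that $\log n_T$ remains logarithmic, and it is precisely this $\tau$ that is optimized when specializing to concrete kernels.
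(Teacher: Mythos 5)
The paper itself gives no proof of this statement: it is quoted verbatim as Theorem~8 of \citet{Srinivas2012Gaussian} and used as an imported tool (the appendix only instantiates it for the squared-exponential and Mat\'ern spectra to get Proposition~\ref{thm:gamma_t_lengthscale_bounds}). Measured against the original proof, your skeleton has the right architecture: discretization of $D$ onto a grid of size $n_T$ via the smoothness assumption inherited from Theorem~2, yielding the additive $\bigO(T^{1-\tau/d})$ term; the monotone-submodularity/greedy guarantee yielding the $(1-e^{-1})^{-1}$ factor (the extra $1/2$ simply comes from the log-det formula for mutual information); and a head/tail split of a spectrum at $T_*$ with the tail controlled by $B_k(T_*)$.

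The one substantive misstep is the step you call ``the hard part.'' You propose to control the spectrum of the data-dependent Gram matrix of the adaptively chosen greedy set by the Mercer spectrum; carried out literally this is both delicate and unnecessary, and it is not what the cited proof does. The original argument never analyzes the adaptively selected Gram matrix: it bounds the information gain of \emph{any} $T$-element subset $A$ of the fixed grid by an allocation argument over the eigendirections of the \emph{full}, data-independent Gram matrix on all $n_T$ grid points, roughly $I(\mb{y}_A;f) \le \tfrac12 \max_{\{m_s\}:\sum_s m_s = T} \sum_s \log\bigl(1+\sigma^{-2} m_s\, n_T \hat\lambda_s\bigr)$, where $\hat\lambda_s$ is the empirical spectrum of the grid kernel matrix; only this fixed empirical spectrum is then related to the operator spectrum $\{\lambda_s\}$, which is how $n_T B_k(T_*) \approx C_4 T^{\tau+1}(\log T) B_k(T_*)$ enters. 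This also corrects your reading of $r$: it is not a ``magnitude level'' and it is not optimized by the prover; it is the number of the $T$ allocated samples falling on the top $T_*$ eigendirections, which is not under our control, hence the $\max_{r\in\{1,\dots,T\}}$ in the bound (only $T_*$ and $\tau$ are free parameters, tuned later per kernel exactly as this paper does). With that correction, the remaining pieces you defer are routine: $\log(1+\sigma^{-2}m_s n_T\hat\lambda_s)\le\log(r n_T/\sigma^2)$ for the head using $\hat\lambda_s\le 1$ and $m_s\le r$, and $\log(1+x)\le x$ for the tail.
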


\cref{thm:bound_gamma_with_spectrum} allows us to bound~$\gamma_t$ through the operator spectrum of the kernel with respect to the uniform distribution. We now consider this quantity for two specific kernels.

\subsection{Bounds for the Squared Exponential Kernel}

\begin{lemma}
\label{thm:log_1_x_bound}
For all~$x \in [0, x_\mathrm{max}^2]$ it holds that
$\log(1 + x^2) \geq \frac{\log(1 + x_\mathrm{max}^2)}{x_\mathrm{max}^2} x^2$
\end{lemma}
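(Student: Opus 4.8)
The plan is to reduce the statement to the elementary fact that a concave function lies above each of its chords. Write $t := x^2$ and $T := x_\mathrm{max}^2$; for $x$ in the stated range one has $0 \le t \le T$, so it suffices to prove the linear lower bound $\log(1+t) \ge \frac{\log(1+T)}{T}\, t$ for every $t \in [0,T]$, and then re-substitute.

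First I would record that $h(t) := \log(1+t)$ is concave on $[0,\infty)$, since $h''(t) = -(1+t)^{-2} < 0$. Fix $t \in [0,T]$ and write it as the convex combination $t = \bigl(1 - \tfrac{t}{T}\bigr)\cdot 0 + \tfrac{t}{T}\cdot T$ with weight $\tfrac{t}{T} \in [0,1]$. Concavity of $h$ together with $h(0) = \log 1 = 0$ then gives
\[
  \log(1+t) = h(t) \;\ge\; \Bigl(1 - \tfrac{t}{T}\Bigr) h(0) + \tfrac{t}{T}\, h(T) \;=\; \frac{\log(1+T)}{T}\, t .
\]
Re-substituting $t = x^2$ and $T = x_\mathrm{max}^2$ yields the lemma. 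Equivalently, one may set $\phi(t) := \log(1+t) - \frac{\log(1+T)}{T}\, t$, observe that $\phi(0) = \phi(T) = 0$ and that $\phi$ is concave, and conclude $\phi \ge 0$ throughout $[0,T]$.

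There is no real obstacle here; the argument is a one-line consequence of concavity. The only points worth checking are that the left endpoint value $h(0)$ vanishes — which is precisely why the chord through the origin has slope $\log(1+T)/T$ — and that the substitution $t = x^2$ keeps us inside $[0,T]$, which follows from monotonicity of squaring. Downstream, this lemma is what lets one lower-bound terms of the form $\log(1 + \sigma^{-2}\lambda_s)$ by a constant multiple of $\lambda_s$ when estimating the information capacity $\gamma_t$ for the squared exponential kernel via \cref{thm:bound_gamma_with_spectrum}.
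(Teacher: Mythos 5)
Your argument is correct, and in fact the paper offers no proof of this lemma at all --- it is stated as a standard fact and applied immediately in the bound on $\alpha^{-d}$ --- so your concavity/chord argument supplies exactly the missing (and canonical) justification: $h(t)=\log(1+t)$ is concave, $h(0)=0$, hence $h$ lies above the chord through $(0,0)$ and $(T,\log(1+T))$ on $[0,T]$. One small point worth flagging: as printed, the lemma's domain reads $x \in [0, x_\mathrm{max}^2]$, and under that literal reading your remark that ``the substitution $t=x^2$ keeps us inside $[0,T]$'' does not hold when $x_\mathrm{max}>1$ (and the claimed inequality itself then fails, e.g.\ $x_\mathrm{max}=2$, $x=4$ gives $\log 17 < 4\log 5$). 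The intended hypothesis is clearly $x \in [0, x_\mathrm{max}]$, equivalently $x^2 \in [0, x_\mathrm{max}^2]$, which is precisely what your chord inequality on $[0,T]$ establishes and all that the application in \cref{eq:bound_alpha_d_Lemma} requires, since there the role of $t \le T$ is played by $2\theta_t^2 a \le 2\theta_0^2 a$, valid because $g(t) \ge 1$.
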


In this section, we use~\cref{thm:bound_gamma_with_spectrum} to obtain concrete bounds for the Gaussian kernel. From~\cite{Seeger2008Information}, we obtain a bound on the eigenspectrum that is given by
%
%
%
\begin{equation*}
\lambda_s \leq cB^{s^{1/d}}, \textnormal{~where~} c = \sqrt{\frac{2 a}{A}}, \quad b = \frac{1}{2 \theta_t^2}, \quad B = \frac{b}{A},\quad \textnormal{and}\quad A = a + b + \sqrt{a^2 + 2ab }.
\end{equation*}
The constant~$a>0$ parameterizes the distribution~${\mu(\mb{x}) \sim \mathcal{N}(\mb{0}, (4a)^{-1} \mb{I}_d)}$.
As a consequence of~$\theta_t > 0$, we have that~$b \geq 0$, $0<B<1$, $c > 0$, and $A > 0$.
In the following, we bound the eigenspectrum. The steps follow the outline of~\cite{Seeger2008Information}, but we provide more details and the dependence on the lengtscales~$\theta_t$ is made explicit:
\begin{align*}
  B_k(T_*) &= \sum_{s > T_*} \lambda_s
  \leq c \sum_{s \geq T_* + 1}B^{s^{1/d}}
  = c \sum_{s \geq T_* + 1} \exp \log (B^{s^{1/d}})
  = c \sum_{s \geq T_* + 1} \exp(s^{1/d} \log B ) ,\\
  &= c \sum_{s \geq T_* + 1} \exp(- s^{1/d}  \alpha )
  \leq c \int_{T_*}^\infty \exp(- \alpha s^{1/d} ) \dif s,
  \intertext{
  where $\alpha = -\log B$. Now substitute $s = \phi(t) = (t/\alpha)^d$. Then $\dif s = \frac{d t^{d-1}}{\alpha} \dif t$ and}
  B_k(T_*) &\leq c \int_{\alpha T_*^{1/d}}^\infty \exp(-t)  \frac{d t^{d-1}}{\alpha} \dif t
  = c d \alpha^{-d} \Gamma(d, \alpha T_*^{1/d}),
  \intertext{
  where $\Gamma(d, \beta) = \int_\beta^\infty e^{-t} t^{d-1} \,dt = (d-1)! e^{-\beta} \sum_{k=0}^{d-1} \beta^k / k!$ for $d \in \mathbb{N}$ as in~\citet[(8.352.4)]{Gradshtein2007Table}. Then, with $\beta = \alpha T_*^{1/d}$,
  }
  B_k(T_*) &\leq c d \alpha^{-d} (d - 1)! e^{-\beta} \sum_{k=0}^{d-1} \beta^k / k!
  = c (d!) \alpha^{-d} e^{-\beta} \sum_{k=0}^{d-1} (k!)^{-1} \beta^k.
\end{align*}
%

%

Before we bound the information gain, let us determine how~$\alpha^{-d}$ and~$c$ depend on the lengthscales. In particular, we want to quantify their upper bounds in terms of~$g(t)$.
\begin{align}
\alpha^{-d} &= \log^{-d} (1 / B )
= \log^{-d} \left( 2 \theta_t^2 A \right)
= \log^{-d} \left( 1 + 2 \theta_t^2 a + 2 \theta_t  \sqrt{a^2 + \frac{a}{\theta_t^2}} \right) \\
&\leq \log^{-d} \left( 1 + 2 \theta_t^2 a \right)
\leq \left( \frac{\log(1 + 2 \theta_0^2 a)}{2\theta_0^2 a} 2 \theta_t^2 a \right)^{-d} \label{eq:bound_alpha_d_Lemma} \text{~~~~~~~~by \cref{thm:log_1_x_bound}} \\
&= \bigO\left( \theta_t^{-2d} \right)
= \bigO\left( g^{2d}(t) \right),
\end{align}
where~\cref{eq:bound_alpha_d_Lemma} follows from~\cref{thm:log_1_x_bound}, since $g(t) \geq 1$ for all~$t>0$.
Similarly,
\begin{equation}
c = \left( \frac{2a}{a + \frac{1}{2\theta_t^2} + \sqrt{a^2 + \frac{a}{\theta_t^2}}} \right)^{d/2}
\leq \left( \frac{2a}{\frac{1}{2 \theta_t^2}} \right)
= \left( 4a\theta_t^2 \right)^{d/2}
= \bigO(g(t)^{-d}).
\end{equation}

As in~\cite{Srinivas2012Gaussian}, we choose~$T_* = (\log(T n_T) / \alpha)^d$, so that~$\beta=\log(T n_T)$ and therefore does not depend on~$g_t$.
Plugging into~\cref{eq:gamma_t_spectrum_bound}, the first term of~\cref{eq:gamma_t_spectrum_bound} dominates and
\begin{align}
\gamma_T = \bigO \left( \left[ \log(T^{d+1} (\log T)) \right]^{d+1} c \alpha^{-d}  \right)^{d/2}
= \bigO \left( (\log T)^{d + 1} g(t)^d  \right).
\end{align}

\subsection{Mat\'ern kernel}

Following the proof for Theorem 2 in the addendum to \citet{Seeger2008Information}, we have that
\begin{equation}
  \lambda_s^{(T)} \leq C (1 + \delta) s^{-(2 \nu + d) / d} ~ \forall s \geq s_0,
\end{equation}

For the leading constant we have~$C=C_3^{(2 \nu + d) / d}$ with~$\alpha = \frac{2 \pi \theta_t}{\sqrt{2 \nu}}$. Hiding terms that do not depend on~$\alpha$ and therefore~$g(t)$, we have
\begin{align*}
  &C_t(\alpha, \nu) = \frac{\Gamma(\nu + d / 2)}{\pi^{d/2} \Gamma(\nu)} \alpha^d = \bigO(g(t)^{-d})
  &&c_1 = \frac{1}{(2\pi)^d C_t(\alpha, \nu)} = \bigO(g(t)^d) \\
  &C_2 = \frac{\alpha^{-d}}{2^d \pi^{d/2}\Gamma(d/2)} = \bigO(g(t)^{d})
  &&C_3 = C_2 \frac{2\tilde{C}}{d}  c_1^\frac{-d}{2 \nu + d} = \bigO( g(t)^d g(t)^\frac{-d^2}{2 \nu + d} ) = \bigO(g(t)^d),
\end{align*}
so that $C = \bigO(g(t)^{2 \nu + d})$. The second term in~$C_3$ must be over-approximated as a consequence of the proof strategy. It follows that
$
  B_k(T_*) = \bigO(g(t)^{2 \nu d} T_*^{1-(2 \nu + d) / d})
$
and, as in~\cite{Srinivas2012Gaussian}, that
$
  \gamma_T = \bigO(T^\frac{d(d+1)}{2\nu + d(d+1)} (\log T) g(t)^{2 \nu d}).
$

\end{document}